\documentclass[runningheads]{llncs}

\usepackage{eccv}

\usepackage{eccvabbrv}

\usepackage{graphicx}
\usepackage{booktabs}

\usepackage[accsupp]{axessibility}  %

\usepackage{hyperref}

\usepackage{latexsym,bm,amsmath}
\usepackage{array}
\usepackage{multirow}
\usepackage{wrapfig}
\usepackage[table]{xcolor}
\definecolor{cvprblue}{rgb}{0.21,0.49,0.74}

\usepackage{orcidlink}

\newcolumntype{C}[1]{>{\centering\arraybackslash}m{#1}}
\newcolumntype{L}[1]{>{\raggedright\arraybackslash}m{#1}}
\newcommand{\modelname}{EmbodiedVAE}

\begin{document}

\title{EmbodiedVAE:\\ Disentangled Video VAE for Efficient and Controllable Embodied Manipulation} 

\titlerunning{Abbreviated paper title}

\author{Jiayi Luo\inst{1,2} \and
Hanxin Zhu\inst{3,}\textsuperscript{$\ast$} \and 
Chen Gao\inst{1,4} \and \\
Jiankun Wang\inst{1} \and 
Cong Wang\inst{5,2} \and
Tianyu He\inst{6} \and \\
Jianxin Li\inst{1,}\textsuperscript{$\dagger$} \and
Zhibo Chen\inst{3,2,}\textsuperscript{$\dagger$}
}

\authorrunning{F.~Author et al.}

\institute{
Beihang University \and
Zhongguancun Academy \and
University of Science and Technology of China \and
National University of Singapore \and
CASIA, Institute of Automation, Chinese Academy of Sciences \and
Microsoft Research Asia \\
\textsuperscript{$\ast$}Project Lead\quad
\textsuperscript{$\dagger$}Corresponding Authors
}

\maketitle

\begin{abstract}
Latent diffusion models (LDMs) have recently significantly advanced embodied learning in constructing powerful embodied manipulation world models. 
However, despite the remarkable performance, existing LDMs predominantly rely on Variational Autoencoders (VAEs) optimized for natural scenes while failing to account for the unique characteristics of embodied manipulation scenarios, yielding latent representations that are neither compact nor controllable, thereby hindering efficient training of LDMs and precise robotic control. 
To solve this problem, we present \textbf{\underline{\modelname}}, a novel video VAE that provides \textbf{compact yet controllable} latent representations tailored for the robotic manipulation world models. Specifically, \modelname{} adopts a dual-encoder, single-decoder architecture with an asymmetric spatio-temporal compression module, which automatically disentangles the robot arm's motion from background environment, resulting in overall compactness while providing explicit embodied latent to support fine-grained action control.
To further preserve the temporal consistency of learned robotic motion latent, we introduce an optimal-transport-based consistency module that explicitly enforces motion fidelity and inter-frame coherence. 
Extensive experiments demonstrate that our proposed \modelname{} achieves superior reconstruction quality with high compression rate, while enabling more precise action control in robotic manipulation scenarios with an average of 2dB PSNR improvement over state-of-the-art video VAEs.

Code is available at: \url{https://github.com/Mutual-Luo/EmbodiedVAE}
\keywords{Video VAE, Embodied Manipulation}
\end{abstract}

\section{Introduction}
\label{sec:intro}

Recent advances in latent diffusion models (LDMs) have demonstrated remarkable potential for embodied learning, particularly in constructing expressive embodied manipulation world models~\cite{zhu2025irasim, jiang2025enerverse,gao2026octonav}, which enable embodied agents to imagine future interactions with environments at a low cost.
At the heart of LDMs lies a variational autoencoder (VAE) that encodes high-dimensional data into latent representations through an encoder-decoder architecture~\cite{rombach2022high, van2017neural, vahdat2020nvae}.
Within this latent space, the diffusion process operates to progressively refine these representations, enabling high-fidelity visual generation~\cite{wan2025wan, yang2024cogvideox, blattmann2023align, blattmann2023stable}.

While the remarkable performance of LDMs, directly applying them to build embodied robotic manipulation world models still faces some challenges.
\begin{wrapfigure}{r}{0.64\textwidth}
\centering
\includegraphics[width=\linewidth]{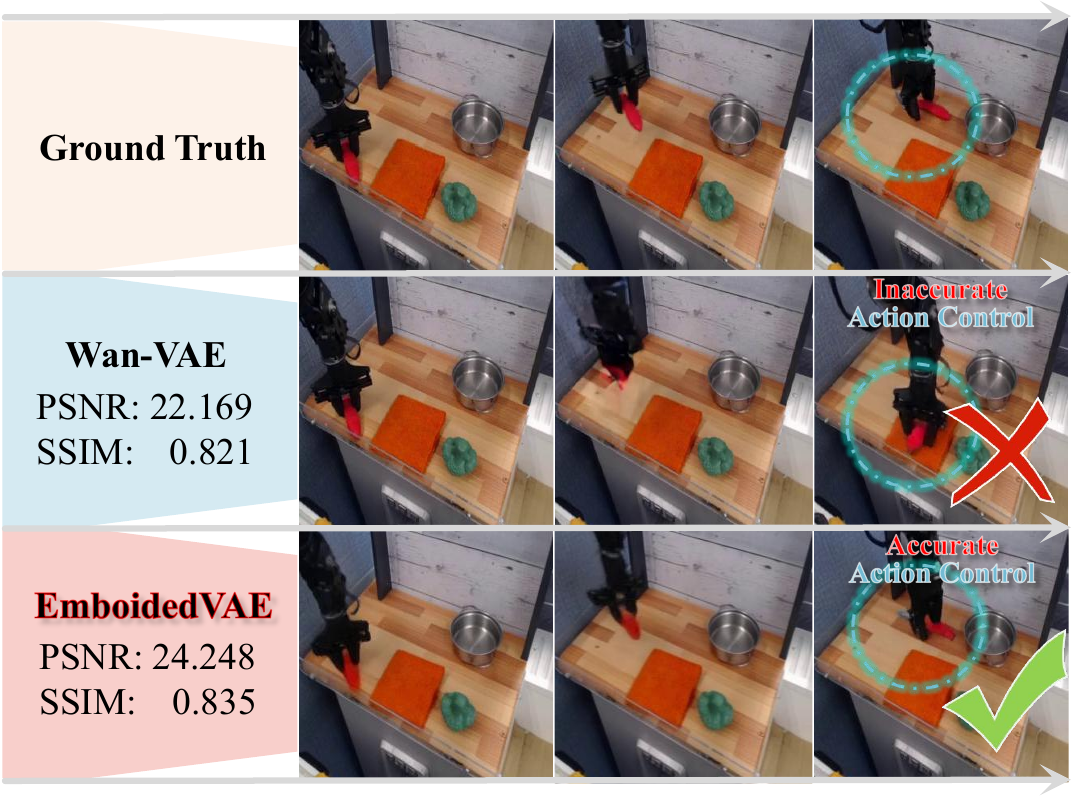}
\caption{
Example comparison on the Bridge dataset~\cite{walke2023bridgedata} illustrating latent controllability of the video VAE in robotic manipulation scenarios, given history frames and future arm actions. 
The LDM backbone is fixed as IRASim-L (461 M)~\cite{zhu2025irasim}, while \textit{the only difference lies in the video VAE used for training and inference}.
}
\vspace{-1.0em}
\label{fig:intro}
\end{wrapfigure}
In such scenarios, the core objective is to accurately predict the outcomes of arm-environment interactions under given future action conditions~\cite{chen2025robotwin, mu2025robotwin, bu2025agibot, makoviychuk2021isaac, kim2024openvla}. 
However, existing LDMs predominantly rely on image or video VAEs designed for natural scenes. As a result, their latents remain insufficiently compact and controllable, thereby \textit{hindering efficient downstream training and precise robotic control}.
While frame-by-frame compression with 2D image VAEs preserves temporal continuity, it fails to exploit inter-frame redundancy, producing large latents that hinder efficient LDM training~\cite{tang2024vidtok, yu2024efficient, luo2026attention}.
Conversely, although existing video VAEs achieve relatively compact latents via spatial-temporal compression, their temporal compression, which lacks explicit modeling of the robotic actions, often causes the motion semantics to be lost.~\cite{wang2025vidtwin, zhu2025irasim, jiang2025enerverse}.
As shown in Figure~\ref{fig:intro}, given history frames and corresponding future actions, the world model trained upon the widely used Wan-VAE~\cite{wan2025wan} yields high-quality visuals but fails to enable accurate action control.
In summary, existing video VAEs in embodied scenarios still struggle to balance compression and motion fidelity, limiting their capacity for accurate motion control in downstream video generation.

To address the aforementioned problems, in this work, we propose our novel \textbf{\underline{\modelname}}, a video VAE tailored for embodied robotic manipulation world models that produces compact yet controllable latent representations to enable efficient and effective downstream embodied learning.
The core idea of our proposed \modelname{} is to automatically disentangle an input video into two compact latent spaces: one capturing the foreground fine-grained, robot-related motion information, and the other modeling the contextual background. 
This explicit disentangle allows the robotic latent to directly incorporate action conditions on the manipulator itself while remaining unaffected by irrelevant environmental noise, leading to more accurate and controllable action generation.

Specifically, \modelname{} disentangles the robotic arm and environmental context with a dual-encoder, single-decoder architecture. The environment encoder captures scene context, while the robotic arm encoder models the manipulator’s motion dynamics.
\textit{(1) To learn to attend to different types of information}, we employ a two-stage training scheme where robotic arm masks are utilized solely during training to facilitate the disentanglement of motion and context. After training, \modelname{} operates without any mask input.
In the first stage, two VAEs are trained for robotics and environment, and in the second, their frozen encoders guide a unified decoder to reconstruct complete videos from disentangled latents.
\textit{(2) To obtain highly compact latent representations}, we adopt an asymmetric compression strategy: the robotic arm encoder applies stronger spatial compression to focus on fine-grained motion, while the background encoder applies stronger temporal compression to exploit redundancy, yielding compact yet informative latent representations.
\textit{(3) To capture motion consistency in the robotic arm latent}, we introduce an optimal-transport-based loss that minimizes information transport cost across frames, encouraging stable motion dynamics and temporal coherence in the learned representations.

Our main contributions can be summarized as follows:
\begin{itemize}
    \item  We propose \modelname{}, a novel video VAE tailored for embodied robotic manipulation that provides compact yet controllable latent representations. 
    \item We achieve disentangled and compact latents using a dual-encoder, single-decoder architecture trained in two stages with asymmetric compression rates. An optimal-transport-based loss further enforces temporal consistency of robotic arm motion in the latent space.
    \item Extensive experiments on embodied manipulation tasks show that \modelname{} achieves high-quality reconstruction under strong compression, enabling more accurate and consistent action control in the downstream tasks.
\end{itemize}

\section{Related Work}
\subsection{Visual Autoencoders}
Recent advances in visual autoencoders have greatly improved image and video generation by learning compact, expressive latent representations~\cite{wu2025improved}. They typically encode visual data into a low-dimensional latent space, as in the 2D VAE of Stable Diffusion~\cite{rombach2022high,blattmann2023stable}, while early video extensions inflated 2D convolutions to 3D~\cite{blattmann2023stable}.
Subsequent works developed fully spatio-temporal autoencoders with explicit temporal compression~\cite{yang2024cogvideox,qin2024xgen,wan2025wan}.
These methods can be broadly categorized into discrete and continuous autoencoders: discrete variants quantize features into a finite codebook of tokens via vector quantization~\cite{van2017neural,yan2021videogpt,chang2022maskgit,yu2023language}, while continuous variants learn a Gaussian latent space through differentiable sampling for smooth, end-to-end visual reconstruction and generation~\cite{agarwal2025cosmos,yang2024cogvideox,yang2024cogvideox,wan2025wan,tang2024vidtok,podell2023sdxl}.

\begin{figure*}[!t]
\centering
\includegraphics[width=\linewidth]{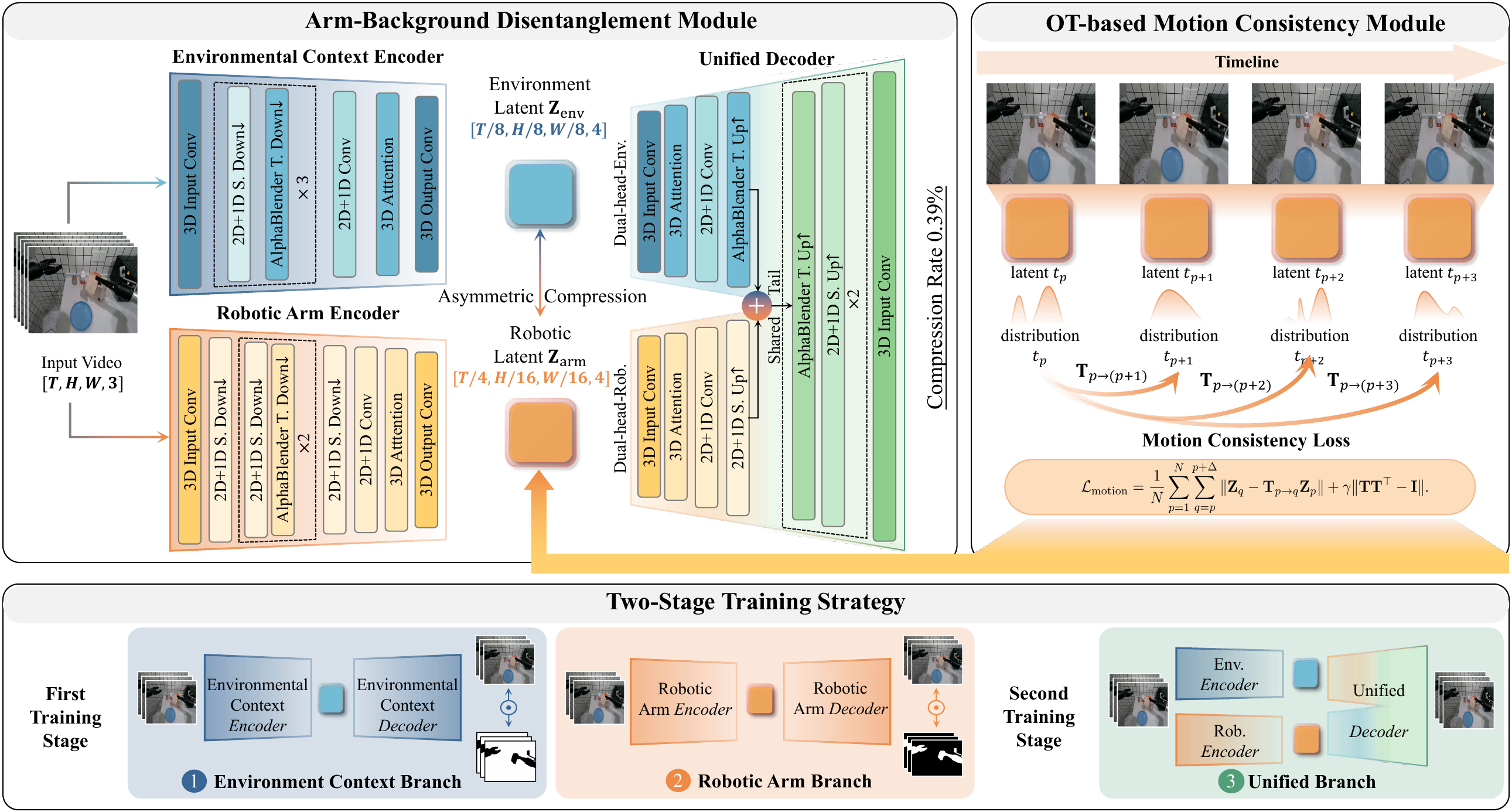}
\caption{The architecture of our \modelname{}, a novel video VAE tailed for embodied manipulation. The Arm–Background Disentanglement Module adopts a dual-encoder, single-decoder architecture with asymmetric spatio-temporal compression, producing compact yet controllable latent representations. To ensure temporal motion coherence, the OT-based Motion Consistency Module further regularizes the robotic arm's latent. The entire \modelname{} is optimized through a two-stage training strategy: the first stage learns the two encoders, while the second stage trains the unified decoder.}
\label{fig:framework}
\end{figure*}

\subsection{High-Compression Video VAEs}
Operating in a compact latent space allows LDMs to train and sample more efficiently, which requires a video autoencoder capable of strong compression while preserving essential semantics.
Several recent works have explored this direction~\cite{tian2024reducio,chen2024deep,liu2025hi}. LTX-Video~\cite{hacohen2024ltx} adopts a video VAE with an aggressive downsampling ratio, whose decoder also performs the final diffusion denoising step. VidTwin~\cite{wang2025vidtwin} achieves a 0.20\% compression rate by decoupling videos into structural and dynamic latent vectors, and Hi-VAE\cite{liu2025hi} further decomposes video dynamics into global motion and fine-grained spatial details. H3AE~\cite{wu2025h3ae} investigates efficient architectures and introduces a latent loss that eliminates the need for complex discriminators.
\textit{However, existing video VAEs still struggle to balance compactness and controllability, limiting their ability to support precise action control in downstream robotic manipulation video generation.}

\subsection{Robotic Manipulation Video Generation}
Video generation models used as world simulators have advanced embodied learning by predicting future states in videos that capture realistic robot-object interactions conditioned on past observations and actions.
iVideoGPT~\cite{wu2024ivideogpt} performs autoregressive, action-conditioned video prediction for embodied tasks, while VLP~\cite{du2023video} and UniSim~\cite{yang2023learning} integrate language and action cues to enhance semantic controllability in video generation.
EVAC~\cite{jiang2025enerverse} introduces a multi-level action-conditioning mechanism with ray-map encoding to improve generalization across environments.
IRASim~\cite{zhu2025irasim} further incorporates a frame-level action-conditioning module within each transformer block to explicitly align actions with generated frames and strengthen temporal consistency.
\textit{However, most existing methods rely on image VAE latents, which limit the training efficiency, while video VAEs struggle to provide controllable latents.}

\section{\modelname}
In this section, we elaborate on \modelname{}, a high-compression video VAE that offers controllable latent representations tailored for embodied manipulation.
As illustrated in Figure~\ref{fig:framework}, our \modelname{} is built upon a dual-encoder, single-decoder architecture with asymmetric compression rates. 
One encoder extracts foreground robotic arm features, while the other captures environmental context.
A unified decoder then reconstructs the entire scene from the fused latent representations, optimized under our designed two-stage training strategy. 
To further enhance motion stability, we introduce an optimal-transport-based consistency loss that enforces temporal coherence in the latent space.

\subsection{Arm-Background Disentanglement Module}
Accurate action control is essential for robotic manipulation video generation, such as in robotic world models, where the system must precisely simulate and predict complex interactions among the robot, manipulated objects, and the surrounding environment~\cite{zhu2025irasim,jiang2025enerverse,gao2023room}. 
However, most existing world models based on LDMs still rely on 2D image VAEs for latent representations~\cite{zhu2025irasim,blattmann2023stable}, since temporal compression in video VAEs often leads to motion information loss, making precise action control difficult.
Although the 2D image VAEs effectively capture spatial structures, they fails to fully exploit temporal redundancy. As a result, the latent space remains insufficiently compact, leading to inefficient training and limited scalability for downstream LDMs.

To address the aforementioned challenges, our proposed \modelname{} aims to learn a dedicated latent space for robotic arm motion, enabling the precise integration of downstream action-control information into the robotic arm representation while suppressing irrelevant backgrounds. 
To achieve this, \modelname{} employs a dual-encoder, single-decoder architecture that automatically disentangles the robotic arm from its environmental surroundings without requiring explicit masks during inference. 
One encoder focuses on capturing fine-grained robotic arm dynamics, while the other encoder models the environmental context. 
This dual-encoder design produces two complementary latent representations, the robotic arm and the environment latent. The explicit robotic latent enables precise action injection while remaining unaffected by unrelated noise.

\subsubsection{Foreground Robotic Arm Encoder}
The detailed network architecture is visualized in Figure~\ref{fig:framework}.  Following VidTok~\cite{tang2024vidtok}, we avoid using fully 3D architectures to reduce the computational overhead, while we only apply causal 3D convolutions in both the input and output layers to integrate spatio-temporal information and preserve temporal causality.
In the penultimate layer, a 3D attention block is introduced to capture fine-grained spatio-temporal dependencies and enhance feature representation. 
Between the input and output stages, we stack multiple hybrid blocks composed of 2D spatial and causal 1D temporal downsampling modules, fused by the AlphaBlender operator~\cite{lin2024open}:
\begin{equation}\label{eq:alphablender}
    \textbf{x}_{\text{down}} = \alpha \cdot \textbf{x}_{1} + (1-\alpha)\cdot \textbf{x}_{2},
\end{equation}
where $\textbf{x}_{1}$ and $\textbf{x}_{2}$ represent the intermediate downsampled features produced from the interpolation operation and the convolution operation branch, respectively. The blending weight $\alpha$ is a fixed hypermeter in practice, and we set $\alpha=\text{Sigmoid}(0.2)$ to balance the information flow from two branches here, and the $\textbf{x}_{\text{down}}$ serves as the final downsampled representation as defined.

Since the robotic arm in embodied manipulation scenarios typically occupies only a small, albeit crucial, fraction of the visual field, our foreground robotic-arm encoder therefore applies a higher spatial compression rate to its feature representation to better match its limited spatial footprint, resulting in a more compact latent representation. Specifically, we stack four spatial downsampling blocks together with two temporal downsampling blocks, achieving an overall compression ratio of 4$\times$(16$\times$16) (temporal $\times$ spatial). We set the number of latent channels to 4, which further reduces spatial redundancy while preserving the most informative arm-centric details, yielding a compact and efficient robotic-arm latent representation for downstream modeling and control.

\subsubsection{Background Environmental Context Encoder}
For the background environmental context encoder, we adopt an architecture similar to the foreground robotic arm encoder. However, in embodied manipulation scenarios, the first-person perspective introduces substantial temporal redundancy in the background, which we leverage by applying a more aggressive temporal compression strategy. 
Specifically, we stack three spatial with three temporal downsampling blocks to acheive a 8$\times$(8$\times$8) (temporal $\times$ spatial) compression and we also set the latent channels to 4,  effectively reducing the temporal redundancy and yielding a temporal compact environment latent representation.

\subsubsection{Unified Reconstruction Decoder}\label{sec:decoder}

As illustrated in Figure~\ref{fig:framework}, the decoder consists of two components: a dual-head module and a shared tail module. In the dual-head module, the robotic arm and environment latent representations are upsampled independently. 
The arm head performs stronger spatial upsampling, whereas the background head emphasizes temporal upsampling.
When both latent representations reach the same resolution, we concatenate them along the channel dimension and feed the merged latent into the shared tail decoder to reconstruct the complete video.
The whole workflow can be formulated as:
\begin{gather}
    \mathbf{\hat{X}} = D(\mathbf{Z}_{\text{arm}}, \mathbf{Z}_{\text{env}}), \\
    \mathbf{Z}_{\text{arm}} = R(E_{\text{arm}}(\mathbf{X})), \quad 
    \mathbf{Z}_{\text{env}} = R(E_{\text{env}}(\mathbf{X})).
\end{gather}
Here, $\mathbf{\hat{X}}$ and $\mathbf{X}$ denote the reconstructed and input videos, respectively, and $\mathbf{Z}_{arm}, \mathbf{Z}_{env}$ denotes the two latent representations. $D(\cdot), E_{\text{arm}}(\cdot)$ and $E_{\text{env}}(\cdot)$ represent the decoder and the two encoders, while $R(\cdot)$ denotes the latent regularizer.
To mirror the architectures of the robotic-arm encoder and the environmental-context encoder, the unified decoder stacks multiple 2D and 1D upsampling blocks integrated with the AlphaBlender operator defined in Eq.~\eqref{eq:alphablender}; the input and output layers use 3D convolutions, and the second layer incorporates a 3D attention block to ensure architectural completeness and consistency of VAE.

\subsection{OT-based Motion Consistency Module}
The motion of the robotic arm carries the essential information in embodied manipulation scenarios, reflecting the interactions between the embodied agent and its environment in a fine-grained, time-varying manner.
However, the temporal compression used in existing video VAEs often causes motion details to be lost during the latent encoding and reconstruction processes. This loss of motion information in the latent space impairs downstream robotic arm action control that relies on these latents for accurate, consistent decision making.

To maximize the preservation of motion information in the resulting compact latents, we introduce an Optimal-Transport-based Motion Consistency Module.
Our design is inspired by the observation that stable motion patterns in videos maintain visual invariance~\cite{wang2019learning}, meaning that visual correspondences between patches remain coherent across consecutive frames.
Such temporal visual invariance reflects the intrinsic stability of motion dynamics. Our module leverages this property to encourage the latent representations to remain motion consistent over time by minimizing the information transport cost across latent frames. 
This constraint is applied to the latent space of the robotic arm encoder, guiding it to stabilize capture temporally coherent motion dynamics, since its corresponding decoder is discarded after training as introduced in  Section~\ref{sec:training}.

Optimal Transport (OT) is a mathematical framework to measure how much effort it takes to transform one distribution into another~\cite{santambrogio2015optimal}, and has proven effective in vision tasks such as semantic correspondence~\cite{liu2020semantic} and visual place recognition~\cite{izquierdo2024optimal} by facilitating coherent alignment of feature distributions.
Specifically, let the latent distributions at timestamp $t_{p}$ and $t_{q}$ be $\mu_{p}, \mu_{q}$ over the latent space $\mathcal{Z}$. The Kantorovich optimal transport plan~\cite{santambrogio2015optimal} between distribution $\mu_{p}$ and distribution $\mu_{q}$ is a coupling $\pi_{p\rightarrow q}\in\Pi(\mu_{p},\mu_{q})$ that minimizes the following expected transport cost under the standard OT formulation:
\begin{equation}\label{eq:ot_definition}
    \pi^{\star}=\arg\min_{\pi\in\Pi(\mu_{p},\mu_{q})}\int c(z_{p},z_{q})d\pi(z_{p},z_{q}),
\end{equation}
where  $c(\cdot, \cdot)$ denotes the transport cost and $\Pi(\mu_{p},\mu_{q})$ is the joint distribution.
Due to the discrete nature of video pixels, here we model the latent distribution as discrete probability distributions: $\mathcal{\mu}_{p}=\sum_{i=1}^{N}a_{i}\delta_{x_{i}}$ and $\mathcal{\mu}_{q}=\sum_{j=1}^{N}b_{j}\delta_{y_{j}}$, where $\delta_{x_{i}}$ denotes the Dirac measure~\cite{erbar2020computation} at $x_{i}$, satisfying $\int f(x)\delta_{x_{i}}(x)dx=f(x_{i})$, and $\mathcal{A}=\{a_{i}\}_{i}^{N}, \mathcal{B}=\{b_{j}\}_{j}^{N}$ represent the corresponding probability masses. 
As the similarity between two latent representations roughly encodes the likelihood of content correspondence across time, consequently, we design the cost function $c(\cdot,\cdot)$ as the negative similarity between these two latent slices:
\begin{equation}
    c(i,j) = -\mathbf{S}_{i,j}=\frac{\left \langle \mathbf{Z}_{p}[i],\mathbf{Z}_{q}[j] \right \rangle }{\sqrt{d}}
\end{equation}
where $\mathbf{Z}_{p}[i]$ denotes the latent feature at position $i$ and $d$ is the channel dimension.
Building upon this, and following~\cite{cuturi2013sinkhorn}, we can reformulate the optimal transport problem in Eq.~\eqref{eq:ot_definition} into an entropy-regularized formulation:
\begin{equation}\label{eq:ot_discrete}
\begin{aligned}
    \mathbf{T}^{\star} &=\arg\min_{\mathbf{T}} \sum_{ij}\mathbf{T}_{ij} c(i,j)+\epsilon \mathcal{H}(\mathbf{T}), \\
    & \text{s.t.}\quad \mathbf{T}\mathbf{1}=\mu_{p}, \mathbf{T}^{\top}\mathbf{1}=\mu_{q}
\end{aligned}
\end{equation}
where $\mathcal{H}(\mathbf{T}) = -\sum_{ij} \mathbf{T}_{ij} (\log \mathbf{T}_{ij} - 1)$ denotes the entropy regularizer, and $\varepsilon$ controls the strength of regularization during optimization. The matrix $\mathbf{T}$ represents the discrete optimal transport plan that minimizes the information transport cost between two latent distributions, serving as the discrete counterpart of $\pi^{\star}$ in this setting. We compute $\mathbf{T}$ using the Sinkhorn-Knopp algorithm~\cite{cuturi2013sinkhorn}, a differentiable and efficient approximation to the Hungarian algorithm~\cite{munkres1957algorithms} in practice.
After obtaining $\mathbf{T}$, we define the motion consistency loss to enforce temporal stability and visual invariance under smooth motion as follows:
\begin{equation}
    \mathcal{L}_{\text{motion}}=\frac{1}{N}\sum_{p=1}^{N}\sum_{q=p}^{p+\Delta}\Vert\mathbf{Z}_{q}-\mathbf{T}_{p \rightarrow q}\mathbf{Z}_{p} \Vert + \gamma \Vert \mathbf{T}\mathbf{T}^{\top} -\mathbf{I} \Vert.
\end{equation}
Here, $\Delta$ controls the temporal window for alignment, and $\gamma$ balances the permutation regularizer $\Vert \mathbf{T}\mathbf{T}^{\top}-\mathbf{I}\Vert$, which regularize $\mathbf{T}$ to approximate a one-to-one correspondence as closely as possible.
$\mathcal{L}_{\text{motion}}$ promotes temporal consistency by minimizing feature misalignment guided by the optimal transport plan computed above.
To further analyze the behavior of $\mathcal{L}_{\text{motion}}$, we present the following proposition, with details provided in Appendix~\ref{sec:proof} for completeness.
\begin{proposition}\label{prop1}
    Let $\mathbf{\hat{T}}^{\star}$ denote the OT plan obtained by the Sinkhorn–Knopp algorithm after row normalization. Denote  $j_i^{\star}=\arg\max_{j}\mathbf{S}_{ij}$ and define the margin $\gamma_i=\mathbf{S}_{i j_i^{\star}}-\max_{j\neq j_i^{\star}}\mathbf{S}_{ij}$, we have $\mathbf{\hat{T}}^{\star}_{ij^{\star}}\ge\frac{1}{1+ae^{-\gamma_{i}/\epsilon}}$ with a constant $a$.
\end{proposition}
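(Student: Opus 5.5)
The plan is to use the Gibbs (scaling) form of the Sinkhorn solution. With cost $c(i,j)=-\mathbf{S}_{ij}$, the first-order conditions of the entropy-regularized problem in Eq.~\eqref{eq:ot_discrete} give
\[
\mathbf{T}^{\star}_{ij}=u_i\,e^{\mathbf{S}_{ij}/\epsilon}\,v_j
\]
for positive scaling vectors $u,v$. The Sinkhorn–Knopp iterations produce exactly this form, since they only alternate row and column rescalings of the Gibbs kernel $K_{ij}=e^{\mathbf{S}_{ij}/\epsilon}$. Row normalization cancels the row factor $u_i$, so
\[
\hat{\mathbf{T}}^{\star}_{ij}=\frac{e^{\mathbf{S}_{ij}/\epsilon}v_j}{\sum_k e^{\mathbf{S}_{ik}/\epsilon}v_k}.
\]
Each row is therefore a softmax of $\mathbf{S}_{i\cdot}/\epsilon$ reweighted by the column potentials $v$. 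This is the key structural step: everything then reduces to bounding one softmax entry.

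Next I would rewrite the entry at $j^{\star}_i$ as
\[
\hat{\mathbf{T}}^{\star}_{ij^{\star}_i}=\Bigl(1+\sum_{k\neq j^{\star}_i}\frac{v_k}{v_{j^{\star}_i}}\,e^{(\mathbf{S}_{ik}-\mathbf{S}_{ij^{\star}_i})/\epsilon}\Bigr)^{-1}.
\]
By definition of the margin, $\mathbf{S}_{ik}-\mathbf{S}_{ij^{\star}_i}\le-\gamma_i$ for every $k\neq j^{\star}_i$. Hence the sum is at most
\[
e^{-\gamma_i/\epsilon}\sum_{k\neq j^{\star}_i}\frac{v_k}{v_{j^{\star}_i}}\le (N-1)\,\frac{\max_k v_k}{\min_k v_k}\,e^{-\gamma_i/\epsilon}.
\]
Setting $a=(N-1)\max_k v_k/\min_k v_k$ gives the claimed bound $\hat{\mathbf{T}}^{\star}_{ij^{\star}}\ge 1/(1+ae^{-\gamma_i/\epsilon})$, since $x\mapsto 1/(1+x)$ is decreasing.

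The main obstacle is showing that $a$ is a genuine finite constant, i.e., that the ratio of column potentials is bounded independently of the iterate. I would argue this from the column marginal constraint. One has $v_j=b_j/\sum_i u_i K_{ij}$, and all entries of $K$ lie in $[e^{-\|\mathbf{S}\|_\infty/\epsilon},e^{\|\mathbf{S}\|_\infty/\epsilon}]$, which is finite since the latents are bounded. This gives
\[
\frac{\max v}{\min v}\le \frac{\max_j b_j}{\min_j b_j}\,e^{2\|\mathbf{S}\|_\infty/\epsilon}.
\]
With uniform marginals $b_j=1/N$, as in practice, the marginal ratio is $1$. This bound is crude and itself depends on $\epsilon$, so I would state $a$ as depending on $N$, the marginals and the potentials rather than claiming uniformity in $\epsilon$. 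The statement only asserts existence of some constant $a$, which this suffices for. If one wants $a=N-1$ exactly, one can instead consider a single row-softmax pass (potentials $v\equiv 1$, i.e., the first half-iteration), and I would note this special case explicitly.
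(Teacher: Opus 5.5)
Your proof is correct and follows the same route as the paper's. Both write the Sinkhorn plan in Gibbs form $\mathrm{diag}(u)\,\mathbf{K}\,\mathrm{diag}(v)$, let row normalization cancel $u_i$, factor out the $j_i^{\star}$ term to get $\bigl(1+\sum_{k\neq j_i^{\star}}\tfrac{v_k}{v_{j_i^{\star}}}e^{-(\mathbf{S}_{ij_i^{\star}}-\mathbf{S}_{ik})/\epsilon}\bigr)^{-1}$, and bound each exponent by the margin. The paper instead takes a row-dependent $a_i=\sum_{j\neq j_i^{\star}}(\boldsymbol{\mu}_v)_j/(\boldsymbol{\mu}_v)_{j_i^{\star}}$ built from marginal entries; you use the column scalings $v_j$, which are the weights that actually appear, make $a$ uniform in $i$, and give an explicit (if $\epsilon$-dependent) finiteness bound on $\max_k v_k/\min_k v_k$ that the paper omits.
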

Proposition \ref{prop1} implies that when latent features exhibit clear separability across frames and the motion remains temporally smooth, the transport matrix $\mathbf{T}$ will converge toward a nearly permutation-like structure, reflecting a strong motion temporal consistency. 
In contrast to a similarity matrix that encodes only \textit{local} pairwise affinities, the optimal transport plan $\mathbf{T}$ establishes a \textit{global} alignment between distributions, capturing coherent many-to-many correspondences that account for all pairwise relations across time~\cite{cuturi2013sinkhorn,izquierdo2024optimal} more effectively.

\subsection{Two-stage Training Strategy}\label{sec:training}
The specific training strategy of our proposed novel \modelname{} consists of two training stages. In the first training stage, the two encoders are obtained, while in the second training stage, we train the unified decoder.

\subsubsection{First training stage.}\label{sec:first_training_stage}
In the first stage, our objective is to train the robotic arm encoder to capture fine-grained motion of the manipulator, while the environmental encoder preserves scene context.
To achieve this, we first generate robotic arm masks using SAM2~\cite{ravi2024sam} guided by Grounding DINO~\cite{ren2024grounding} with the text prompt ``robotic arm''. It is worth noting that these masks are used only as auxiliary supervision during training; once trained, \modelname{} can automatically disentangle motion and context information without relying on masks. The training target loss for the robotic arm is defined as:
\begin{equation}\label{eq:loss_arm}
    \mathcal{L}_{\text{arm}} = \mathcal{L}_{\text{rec}} +\alpha \mathcal{L}_{\text{motion}} + \beta \mathcal{L}_{\text{KL}} + \lambda\mathcal{L}_{\text{aux}},
\end{equation}
where the reconstruction loss $\mathcal{L}_{\text{rec}}=\Vert (\mathbf{\hat{X}}-\mathbf{X})\odot \mathbf{M}) \Vert $, and $\mathbf{M}\in\{0,1\}^{T\times H\times W}$ is the pre-built binary robotic arm mask. 
$\mathcal{L}_{\text{KL}}$ denotes the KL divergence loss term that defined as $\mathcal{L}_{\text{KL}}=\text{KL}(\mathcal{N}(\mathbf{u}_\mathbf{z},\boldsymbol{\sigma}_{\mathbf{z}})\Vert \mathcal{N}(\mathbf{0},\mathbf{I}))$, 
and $\mathcal{L}_{\text{aux}}$ represents the auxiliary loss term which we further combine from the standard adversarial loss and the feature-level perceptual loss~\cite{tang2024vidtok}. 
$\alpha, \beta, \lambda$ is the hyperparameters.
As for the training of the environmental context encoder, the overall loss is defined as:
\begin{equation}
    \mathcal{L}_{\text{env}}= \mathcal{L}_{\text{rec}} + \beta \mathcal{L}_{\text{KL}} + \lambda\mathcal{L}_{\text{aux}},
\end{equation}
where the reconstruction loss $\mathcal{L}_{\text{rec}}=\Vert (\mathbf{\hat{X}}-\mathbf{X})\odot (1-\mathbf{M}) \Vert $ and the other loss terms are the same as those used for training the robotic arm encoder in Eq.~\eqref{eq:loss_arm}.

\subsubsection{Second training stage}
After the first training stage, we freeze the two pre-trained encoders from the robotic-arm and environment-context branches with a very low learning rate while discarding their decoders. We then train the unified decoder, which is introduced in Section~\ref{sec:decoder}, with the following loss:
\begin{equation}
    \mathcal{L}_{\text{unified}}= \mathcal{L}_{\text{rec}} + \beta \mathcal{L}_{\text{KL}} + \lambda\mathcal{L}_{\text{aux}},
\end{equation}
where the reconstruction loss term$\mathcal{L}_{\text{rec}}=\Vert \mathbf{\hat{X}}-\mathbf{X} \Vert $, and the remaining terms are identical to the first training stage introduced in Section~\ref{sec:first_training_stage}.

\subsection{Action-controlled Video Generation}
Our \modelname{} can be seamlessly integrated with most existing LDMs, enabling the efficient and controllable downstream robotic action-controlled video generation. 
To demonstrate this integration, we adopt IRASim~\cite{zhu2025irasim}, a DiT-style LDM that functions as the robotic manipulation world model, as an example in our experiments.
In this setup, the latent representations of the robotic arm and the environmental context, $\mathbf{Z}_{\text{arm}}$ and $\textbf{Z}_{\text{env}}$, are first converted into two sequences of tokens with the same hidden dimension.
Both latent representations are processed through the same diffusion procedure using shared DiT parameters, \textit{ensuring that no additional computational overhead is introduced.} To facilitate effective interaction between the two complementary latent streams, we insert a cross-attention layer after every $k$ transformer blocks, where $k=2$.
This design enables adaptive information exchange between the arm and background representations. For example, the arm motion can be adjusted based on nearby objects, while the environment representation accounts for motion-induced changes such as lighting and shadows. This interaction results in globally consistent scene dynamics and locally accurate motion generation.

\begin{table*}[!b]
  \centering
  \caption{Quality comparison with eight baselines on the \textbf{video VAE reconstruction task}, evaluated on two datasets using four metrics. The best results are highlighted in \textbf{bold}, and the second-best results are \underline{underlined}.}
  \vspace{-0.5em}
  \label{tb:reconstruction}
  \tabcolsep=0.0cm
  \resizebox{\linewidth}{!}{ 
    \begin{tabular}{C{2.8cm}|C{1.8cm}|C{1.3cm}C{1.3cm}C{1.3cm}C{1.5cm}|C{1.3cm}C{1.3cm}C{1.3cm}C{1.5cm}}
    \toprule
    \multicolumn{2}{c|}{Datasets} & \multicolumn{4}{c|}{Agibot-2025~\cite{jiang2025enerverse}} & \multicolumn{4}{c}{Bridge~\cite{walke2023bridgedata}} \\
    \midrule
    Method & Com. Rate$\downarrow$ & PSNR$\uparrow$ & LPIPS$\downarrow$ & SSIM$\uparrow$ & FVD$\downarrow$  & PSNR$\uparrow$ & LPIPS$\downarrow$ & SSIM$\uparrow$ & FVD$\downarrow$ \\
    \midrule
    OpenSoraPlan~\cite{lin2024open} & 1.04\% & 30.7553 & \underline{0.0834} & \underline{0.9257} & \textbf{290.2550} & 30.3755 & \underline{0.1098} & 0.8948 & \underline{340.5284} \\
    Cosmos~\cite{agarwal2025cosmos} & 2.08\% & 28.2329 & 0.1520 & 0.8778 & 889.7757 & 27.8812 & 0.1916 & 0.8386 & 715.6345\\
    iVideoGPT~\cite{wu2024ivideogpt} & 1.50\% & 29.2503 & 0.1060 & 0.9096 & 760.6489 & 28.3325 & 0.1447 & 0.8738 & 522.1818 \\
    MAGVIT-v2~\cite{yu2023language} & 0.65\% & 27.1014 & 0.2203 & 0.7197 & 849.5063 & 26.8635 & 0.2396 & 0.6986 & 549.9939 \\
    Vidtwin~\cite{wang2025vidtwin} & \textbf{0.20\%} & 30.8263 & 0.1216 & 0.9202 & 443.1976 & 29.7386 & 0.1679 & 0.8531 & 585.8065 \\
    EMU-3~\cite{wang2024emu3} & 0.53\% & 28.9081 & 0.1064 & 0.8926 & 599.8921 & 27.5814 & 0.1517 & 0.8317 & 736.5338 \\
    CV-VAE~\cite{zhao2024cv} & 0.53\% & 30.2767 & 0.1018 & 0.9128 & 392.6846 & 29.8419 & 0.1330 & 0.8789 & 418.8394 \\
    CMD~\cite{yu2024efficient}   & 6.85\% & \underline{31.4030} & 0.1086 & 0.8995 & 439.9010 & \textbf{31.3764} & 0.1115 & \underline{0.8996} & 358.6915 \\
    \midrule
    \textbf{\modelname{}} & \underline{0.39\%} & \textbf{31.6745} & \textbf{0.0723} & \textbf{0.9345} & \underline{368.5511} & \underline{30.6957} & \textbf{0.1060} & \textbf{0.9017} & \textbf{309.7604} \\
    \bottomrule
    \end{tabular}%
}
  \vspace{-1.5em}
\end{table*}

\section{Experiments}

\subsection{Experiments Setup}
\textbf{Datasets.}
We train our proposed \modelname{} on a self-collected dataset of approximately one million robotic manipulation videos, aggregated from RobNet~\cite{dasari2019robonet}, RobSet~\cite{bharadhwaj2024roboagent}, BC-Z~\cite{jang2022bc}, RH20T~\cite{fang2023rh20t}, and DROID~\cite{khazatsky2024droid}, covering diverse manipulation behaviors across multiple environments.
Robotic arm masks are automatically generated for each video using SAM2~\cite{ravi2024sam} and Grounding DINO~\cite{ren2024grounding} with the text prompt ``robotic arm''.
For evaluation, we use Agibot-2025~\cite{jiang2025enerverse} and Bridge~\cite{walke2023bridgedata} for video VAE reconstruction tasks, and RT-1~\cite{brohan2022rt} and Bridge~\cite{walke2023bridgedata} for action-conditioned robotic manipulation video generation.

\noindent\textbf{Baselines.}
(1) \textit{For reconstruction task evaluation}, we compare our \modelname{} with several state-of-the-art high-compression video VAEs that also achieve good reconstruction quality, including OpenSoraPlan~\cite{lin2024open}, MAGVIT-v2~\cite{yu2023language}, and CV-VAE~\cite{zhao2024cv} with continuous latents; EMU-3~\cite{wang2024emu3} and Cosmos~\cite{agarwal2025cosmos} as discrete token-based models; and approaches that decouple videos into multiple latent spaces, such as CMD~\cite{yu2024efficient}, iVideoGPT~\cite{wu2024ivideogpt}, and VidTwin~\cite{wang2025vidtwin}.
(2) \textit{For downstream video generation in action-controlled robotic manipulation}, we compare \modelname{} with several representative VAEs. Specifically, we include high-fidelity video VAEs such as Wan-VAE~\cite{wan2025wan} and Cog-VAE~\cite{yang2024cogvideox}; high-compression VAEs including OpenSoraPlan~\cite{lin2024open}, EMU-3~\cite{wang2024emu3}, and CV-VAE~\cite{zhao2024cv}; and models that decouple videos into multiple latent components, such as VidTwin~\cite{wang2025vidtwin} and CMD~\cite{yu2024efficient}. 
In addition, we evaluate the 2D image VAE SDXL~\cite{podell2023sdxl} from Stable Diffusion, a strong baseline that encodes videos frame by frame, retaining full temporal information without temporal compression by a non-compact latent.

\noindent\textbf{Metrics.}
Following~\cite{tang2024vidtok,zhu2025irasim}, we evaluate all methods on video reconstruction and action-conditioned video generation for robotic manipulation using PSNR~\cite{hore2010image}, SSIM~\cite{wang2004image}, LPIPS~\cite{zhang2018unreasonable}, and FVD~\cite{unterthiner2018towards}.
Efficiency is assessed by the VAE compression rate~\cite{tang2024vidtok,rombach2022high}, defined as the ratio of latent to original video dimensions.

\noindent\textbf{Implements.}
We train \modelname{} on 8 fps, 16-frame video clips at a resolution of 256 $\times$ 256, and evaluate it on 16 fps, 16-frame clips of the same resolution for reconstruction tasks. The model is trained using 8 NVIDIA A800 GPUs.

\subsection{VAE Reconstruction Quality}\label{sec:exp_recon}

\begin{wrapfigure}{l}{0.5\textwidth}
\centering
\vspace{-2.3em}
\includegraphics[width=\linewidth]{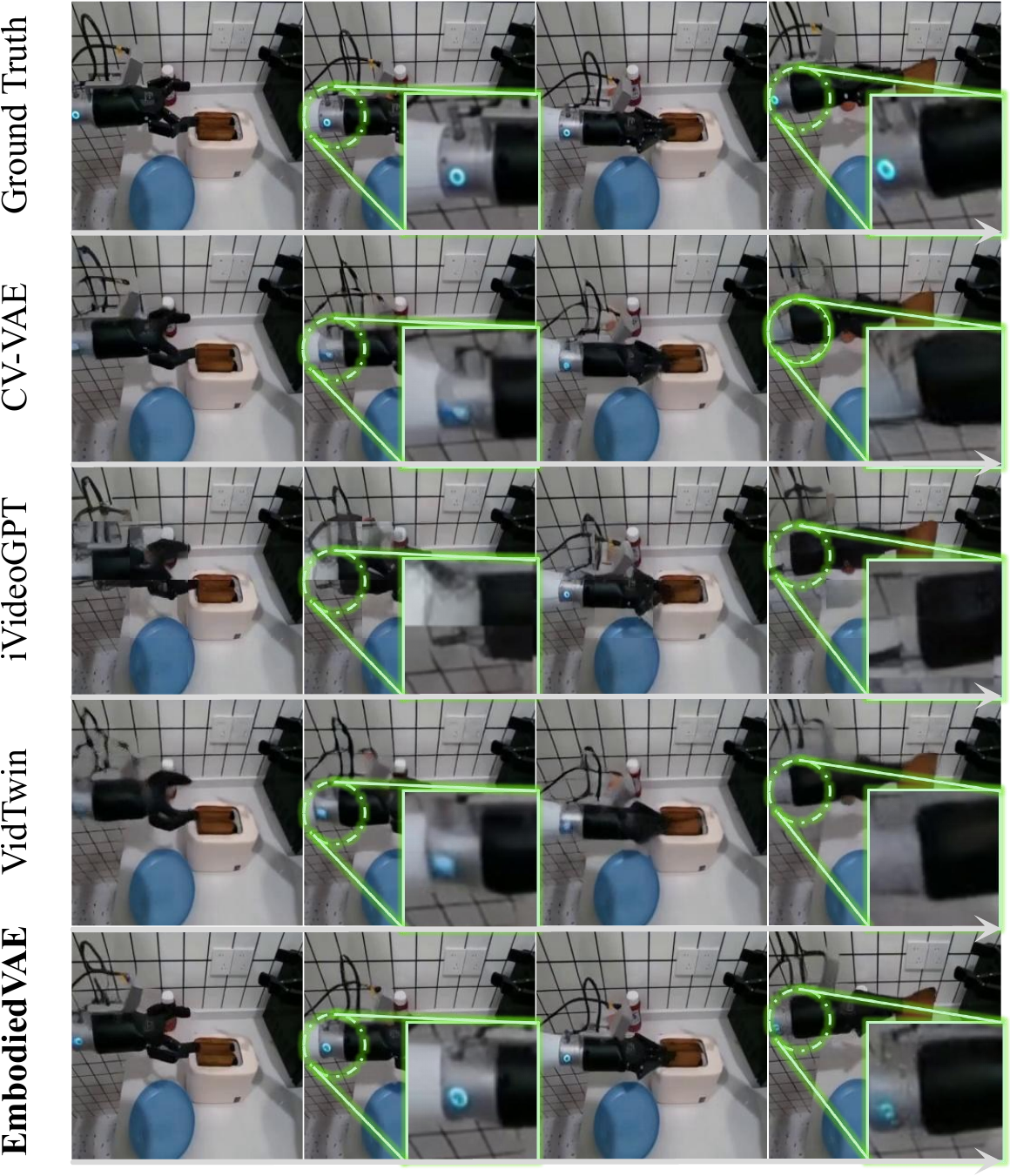}
\vspace{-1em}
\caption{Comparison of the \textbf{video VAE reconstruction quality} between the proposed \modelname{} and existing representative baseline methods. Results are shown from the Agibot-2025~\cite{jiang2025enerverse} dataset.}
\vspace{-2em}
\label{fig:reconstruction}
\end{wrapfigure}

In the video VAE reconstruction experiments, we comprehensively compare the proposed \modelname{} with all baseline methods in terms of both reconstruction quality and compression rate, and the results are summarized in the Table~\ref{tb:reconstruction}. 
As we can seen, the proposed \modelname{} achieves the best overall reconstruction performance across both datasets (except for PSNR on the Bridge dataset) while maintaining a competitive compression rate of 0.39\%. 
In particular, \modelname{} achieves the highest PSNR of 31.6745 in Agibot-2025, while its compression rate is nearly 1/20 of that of the runner-up baseline CMD, demonstrating superior reconstruction fidelity under high compression rate. 
We further present a qualitative example in Figure~\ref{fig:reconstruction}, where \modelname{} produces substantially clearer reconstructions of robotic arm details with a high compression rate compared with baselines.
In particular, baselines such as VidTwin fail to preserve fine-grained robotic arm visual cues like electrical components, whereas our \modelname{} successfully reconstructs these critical details.

\begin{table*}[!t]
  \centering
  \caption{Quality comparison with seven baselines on the \textbf{downstream action-controlled video generation in robotic manipulation scenarios} using four metrics. The best results are highlighted in \textbf{bold}, the second-best results are \underline{underlined}, and \colorbox[HTML]{E6E6E6}{value$^{\star}$} denotes results obtained with the \colorbox[HTML]{E6E6E6}{image VAE baseline} used during both training and inference, which is \textit{without any temporal compression}.}
  \vspace{-0.5em}
  \label{tb:generation}
  \tabcolsep=0.00cm
  \resizebox{\linewidth}{!}{ 
    \begin{tabular}{C{2.8cm}|C{2.0cm}|C{1.3cm}C{1.3cm}C{1.3cm}C{1.5cm}|C{1.3cm}C{1.3cm}C{1.3cm}C{1.5cm}}
    \toprule
    \multicolumn{2}{c|}{Datasets} & \multicolumn{4}{c|}{RT-1~\cite{brohan2022rt}}     & \multicolumn{4}{c}{Bridge~\cite{walke2023bridgedata}} \\
    \midrule
    Method & Com. Rate↓ & PSNR↑ & LPIPS↓ & SSIM↑ & FVD↓  & PSNR↑ & LPIPS↓ & SSIM↑ & FVD↓ \\
    \midrule
    \rowcolor[HTML]{E6E6E6}
    SDXL~\cite{podell2023sdxl} & 8.32\% &  \underline{23.1988}$^{\star}$   &   0.1901   &    0.8074   &   \underline{761.7837}$^{\star}$  &   \underline{22.3624}$^{\star}$   &   0.1753   &   0.7915  &  \textbf{455.7345$^{\star}$} \\
    \midrule
    OpenSoraPlan~\cite{lin2024open} & 1.04\% &    21.6494   &   0.2486    &    0.7432   &   958.0784    &   21.0405    &  0.2376     &   0.7577    & 785.5559 \\
    Emu3~\cite{wang2024emu3}  & 0.53\% &   21.6749    &    0.2456   &   0.7286    &   1049.2935    &   20.8573    &   0.2506    &    0.7053   & 986.3582 \\
    CV-VAE~\cite{zhao2024cv} & 0.53\% & 20.3296 & 0.3047 & 0.6859 & 2529.3193 & 19.3068 & 0.3544 & 0.6568 & 1829.7292 \\
    CMD~\cite{yu2024efficient}   & 6.85\% & 20.9454 & 0.2560 & 0.7671 & 2644.7208 & 20.1799 & 0.2521 & 0.7419 & 1464.6843 \\
    Vidtwin~\cite{wang2025vidtwin} & \textbf{0.20\%} & 21.1037 & 0.3710 & 0.6697 & 1824.3383 & 20.4466 & 0.3438 & 0.6889 & 1690.1818 \\
    Cog-VAE~\cite{yang2024cogvideox} & 2.08\% &   21.2231    &  0.2174     &    0.7907   &   1135.6917    & 21.6951 & 0.2095 & 0.7807 & 1302.4434 \\
    Wan-VAE~\cite{wan2025wan} & 2.45\% &   \underline{22.7334}    &    \underline{0.1848}   &   \underline{0.8138}    &    \underline{802.2071}   &   \underline{22.1690}    &   \underline{0.1601}    &   \underline{0.8211}    &  \underline{548.9621}\\
    \midrule
    \textbf{\modelname{}} & \underline{0.39\%} & \textbf{24.7082} & \textbf{0.1707} & \textbf{0.8263} & \textbf{716.3273} & \textbf{24.2484} & \textbf{0.1409} & \textbf{0.8366} & 631.6450 \\
    \bottomrule
    \end{tabular}%
}
\end{table*}

\begin{figure*}[!t]
\centering
\vspace{-0.8em}
\includegraphics[width=\linewidth]{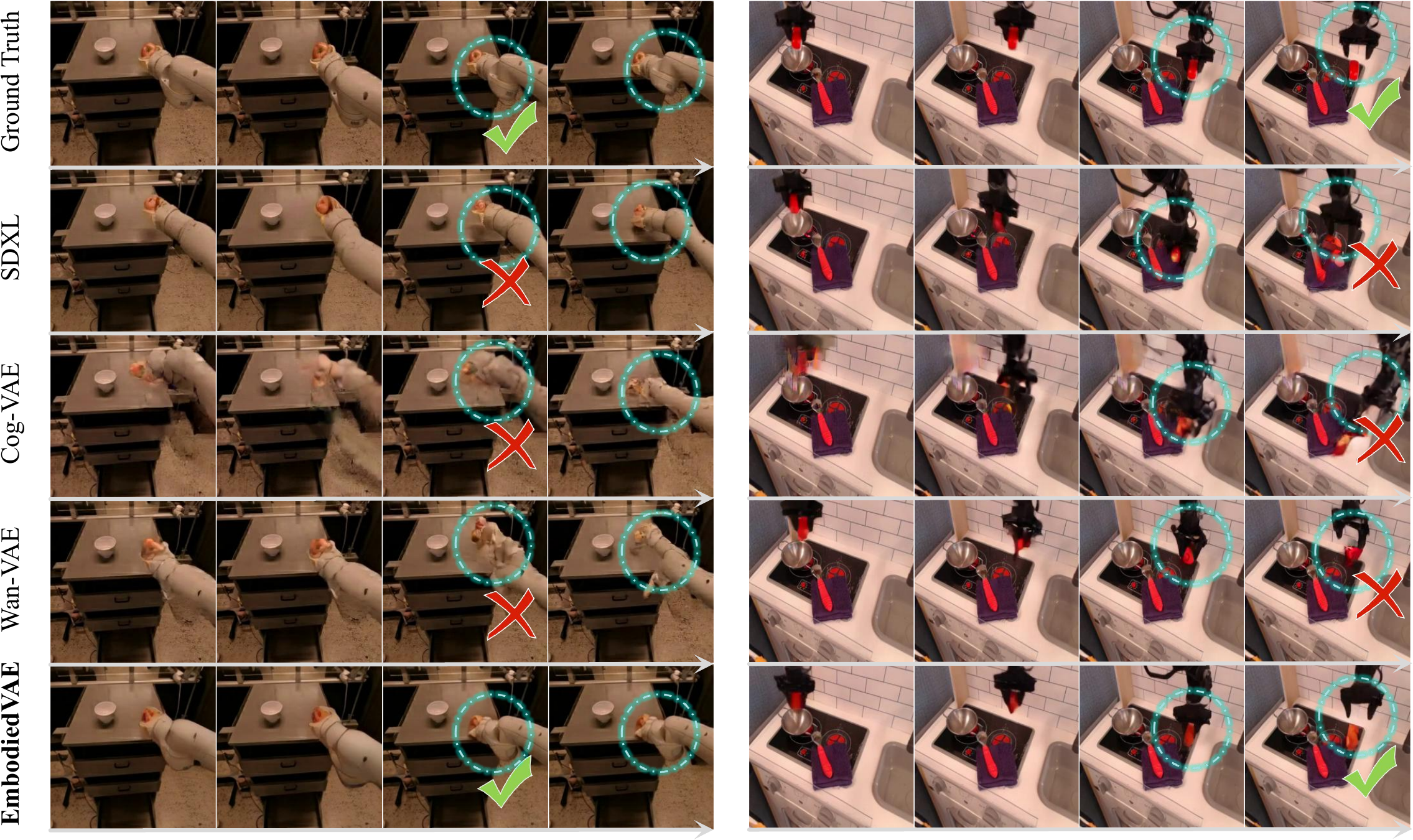}
\caption{Qualitative comparisons of \textbf{action-conditioned world models for robotic manipulation} trained under the same setup, where the only difference lies in the VAE used for latent representations: our \modelname{} versus representative video VAEs. Results on the left are from RT-1~\cite{brohan2022rt}, and results on the right are from Bridge~\cite{walke2023bridgedata}.}
\vspace{-1.3em}
\label{fig:generation}
\end{figure*}

\subsection{\texorpdfstring{\textit{\underline{Controllable Studies}}: Robotic Manipulation World Model}{Action-Controlled world model for Robotic Manipulation}}\label{sec:exp_control}
We evaluate \modelname{} on action-conditioned video prediction for robotic manipulation using IRASim~\cite{zhu2025irasim}, with the LDM backbone fixed to IRASim-L (461M) and only the VAE component varied during training and inference (Details are in Appendix~\ref{sec:exp}).
As shown in Table~\ref{tb:generation}, our \modelname{} significantly outperforms the widely recognized Wan-VAE and Cog-VAE on this task, achieving a notable improvement of 2.02 PSNR over the runner-up video VAE (Wan-VAE). 
It is worth noting that \modelname{} even surpasses the SDXL~\cite{podell2023sdxl}, a powerful image-based VAE without temporal information compression.
Example analysis in Figure~\ref{fig:generation} further show that \modelname{} can accurately predict robot-environment interactions, while competing video VAEs (e.g., Wan-VAE) often produce visually plausible yet physically inconsistent motions.
These advantages are attributed to our disentangled design, which enables precise action injection without interference from irrelevant factors, and the consistency loss further preserves the motion dynamics despite time compression.

\begin{wraptable}{r}{0.7\columnwidth}
  \centering
  \vspace{-3.5em}
  \caption{Ablation studies on both the video VAE reconstruction (``Recon.'') task and the downstream action-controlled video prediction for robotic manipulation task (``Manip.'').}
  \label{tb:ablation}
  \tabcolsep=0.1cm
  \resizebox{\linewidth}{!}{ 
   \begin{tabular}{c|c|cc|cc}
    \toprule
    \textbf{Task}  & \textbf{Methods} & \textbf{PSNR$\uparrow$} & \textbf{LPIPS$\downarrow$} & \textbf{PSNR$\uparrow$} & \textbf{LPIPS$\downarrow$} \\
    \midrule
    \multirow{4}[4]{*}{Recon.} & Datasets & \multicolumn{2}{c|}{Agibot-2025~\cite{jiang2025enerverse}} & \multicolumn{2}{c}{Bridge~\cite{walke2023bridgedata}} \\
\cmidrule{2-6}          & \modelname{} & 31.6745 & 0.0723 & 30.6957 & 0.1060 \\
          & \textit{w/o} \textit{Mask} & \textit{31.1812} & \textit{0.0728} & \textit{29.6680} & \textit{0.1211} \\
          & \textit{w/o} $\mathcal{L}_{\text{motion}}$ & \textit{31.0213} & \textit{0.0835} & \textit{29.9769} & \textit{0.1087}  \\
    \midrule
    \multirow{4}[4]{*}{	Manip.} & Datasets & \multicolumn{2}{c|}{RT-1~\cite{brohan2022rt}} & \multicolumn{2}{c}{Bridge~\cite{walke2023bridgedata}} \\
\cmidrule{2-6}          & \modelname{} & 24.7082 & 0.1707 & 24.2484 & 0.1409 \\
          & \textit{w/o} \textit{Mask} & \textit{23.8044} & \textit{0.1904} & \textit{23.7306} & \textit{0.1705} \\
          & \textit{w/o} $\mathcal{L}_{\text{motion}}$ & \textit{23.8187} & \textit{0.1863} & \textit{23.9726} & \textit{0.1643} \\
    \bottomrule
    \end{tabular}%
}
\vspace{-2em}
\end{wraptable}

\subsection{Ablation Studies}\label{sec:exp_ablation}
We conduct an ablation study to evaluate the effectiveness of the proposed OT-based motion consistency loss $\mathcal{L}_{\text{motion}}$ in both the video VAE reconstruction task and the downstream action-controlled video prediction task.
As shown in Table~\ref{tb:ablation}, this module consistently improves both tasks, with larger gains in the downstream manipulation setting.

\subsection{Disentangle Studies}
\begin{wrapfigure}{r}{0.75\textwidth}
\centering
\vspace{-2.5em}
\includegraphics[width=\linewidth]{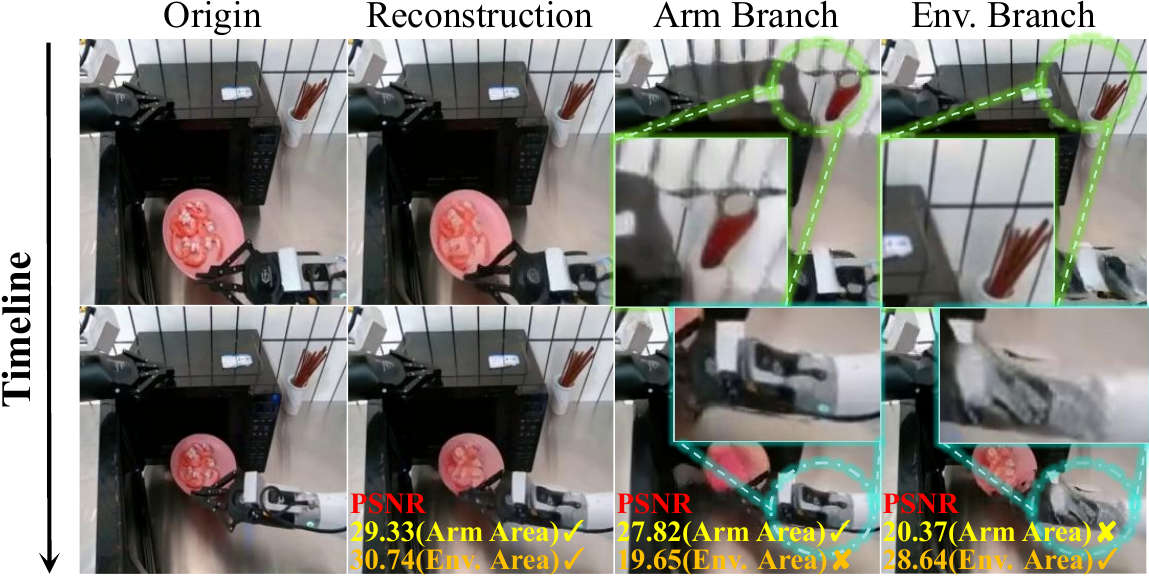}
\caption{
Disentanglement analysis on the Agibot-2025~\cite{jiang2025enerverse} dataset. 
}
\vspace{-2em}
\label{fig:disentangle}
\end{wrapfigure}

We visualize the decoded outputs of each branch in Figure~\ref{fig:disentangle}. As observed, the arm branch captures the robotic motion, while the environment branch models the global scene structure, resulting in disentangled latent representations that facilitate efficient and controllable downstream tasks.

\begin{figure}[!t]
\centering
\vspace{-0.5em}
\centering
\includegraphics[width=\linewidth]{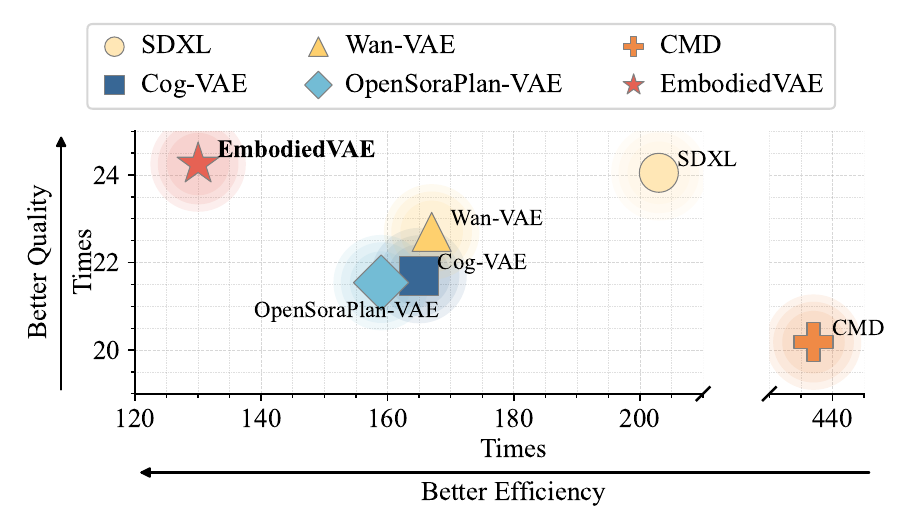}
\captionof{figure}{
Comparison of our proposed \modelname{} and baselines on downstream training efficiency using the Bridge~\cite{walke2023bridgedata} dataset.
The reported time corresponds to 1000 training steps with pre-encoded latents~\cite{yang2024cogvideox} from 256 × 256, 16-frame  videos(17 for Wan-VAE due to its characteristics).
}
\label{fig:efficiency}
\vspace{-1em}
\end{figure}

\subsection{\textit{\underline{Efficiency Studies}}: Downstream LDM Training Time Analysis}\label{sec:exp_efficiency}
To investigate the efficiency benefits of the compact latents introduced by \modelname{} in downstream LDM training, we measure the training time and memory consumption of our model in the downstream robotic manipulation world model, also using IRASim-L~\cite{zhu2025irasim} as an example. The results are reported in Figure~\ref{fig:efficiency}. 
The reported values are measured after the pre-encoding operation, where the latents of the training data are precomputed following previous works~\cite{yang2024cogvideox, zhu2025irasim}.
As observed, the compact latents produced by our \modelname{} effectively reduce resource consumption during downstream training.

\section{Conclusion}
We present \modelname{}, a high-compression and controllable video VAE tailored for embodied manipulation. To enable efficient and high-quality downstream action control, \modelname{} disentangles videos into two complementary latents via a dual-encoder, single-decoder architecture with asymmetric compression. An OT-based motion consistency module further encourages coherent robotic motion preservation. The resulting arm and environment latents enable precise condition injection. Extensive experiments on video reconstruction and action-conditioned manipulation show the effectiveness of \modelname{}.
\clearpage

\section*{Acknowledgements}
The corresponding authors are Jianxin Li and Zhibo Chen. 
This work was supported by the National Natural Science Foundation of China under Grant No. 62225202, and the Zhongguancun Academy Project under Grant No. C20250302.

\bibliographystyle{splncs04}
\bibliography{main}

\clearpage
\setcounter{page}{1}

\appendix
\setcounter{table}{0}
\setcounter{footnote}{0}
\setcounter{figure}{0}
\setcounter{equation}{0}
\setcounter{proposition}{0}

\section{Proof}
\label{sec:proof}
To demonstrate the proposed Proposition~\ref{prop1}, we first introduce the Sinkhorn–Knopp algorithm.

\subsection{Preliminary: Sinkhorn-Knopp Algorithm}

The Sinkhorn-Knopp algorithm is a classical method for efficiently solving discrete optimization problems, and is particularly well-suited for computing the optimal transport (OT) distance between two discrete probability distributions.
In the context of OT, \cite{cuturi2013sinkhorn} introduced an entropy-regularized formulation that enables fast and stable computation. Specifically, entropy regularization smooths the feasible domain of the original OT problem, transforming it into a differentiable matrix-scaling (or permutation) problem. The resulting objective can then be efficiently optimized via the Sinkhorn iterative normalization procedure, which yields an approximate yet accurate solution to the original OT objective.
Formally, the entropy-regularized OT problem reformulates as:
\begin{equation}\label{eq_appendix:ot_discrete_definition}
\begin{aligned}
\mathbf{T}^{\star} &= \arg\min_{\mathbf{T}} \sum_{ij}\mathbf{T}_{ij} c(i,j) + \epsilon \mathcal{H}(\mathbf{T}), \\
& \text{s.t.} \quad \mathbf{T}\mathbf{1} = \boldsymbol{\mu}_{p}, \quad \mathbf{T}^{\top}\mathbf{1} = \boldsymbol{\mu}_{q},
\end{aligned}
\end{equation}
where $\mathbf{T}$ denotes the transport matrix between the source distribution $\boldsymbol{\mu}_{p}$ and the target distribution $\boldsymbol{\mu}_{q}$, $c(i,j)$ represents the transport cost between elements $i$ and $j$, $\epsilon$ is the entropy regularization coefficient, and $\mathcal{H}(\mathbf{T})$ denotes entropy term encouraging smoothness in the transport plan.
From the perspective of probability and information theory, a uniform distribution has the maximum entropy. 
Therefore, entropy regularization will make the optimal transfer matrix $\mathbf{T}^{\star}$ closer to a uniform distribution.
Then Eq.~\eqref{eq_appendix:ot_discrete_definition} can be solved by adopting the Sinkhorn-Knopp Algorithm.

\textbf{Sinkhorn iterative solution.} 
Suppose we have two marginal distributions represented by vectors $\boldsymbol{a}_{u}\in \mathbb{R}^{d_{1}}$ and $\boldsymbol{a}_{v} \in \mathbb{R}^{d_{2}}$.
We initialize them as: 
\begin{equation}
   \boldsymbol{a}_{u}^{(0)}=\frac{\mathbf{1}_{d_{1}}}{d_{1}}, \boldsymbol{a}_{v}^{(0)}=\frac{\mathbf{1}_{d_{2}}}{d_{2}},
\end{equation}
where $\mathbf{1}_{d}$ denotes a $d$-dimensional all-ones vector.
The Sinkhorn iteration alternately updates the scaling vectors as:
\begin{equation}
    \boldsymbol{a}_{u}^{k+1}\leftarrow \frac{\boldsymbol{\mu}_{u}}{\mathbf{K}\cdot\boldsymbol{a}_{u}^{k}},\boldsymbol{a}_{v}^{k+1}\leftarrow \frac{\boldsymbol{\mu}_{v}}{\mathbf{K}^{\top}\cdot\boldsymbol{a}_{u}^{k}},
\end{equation}
where the kernel matrix $\mathbf{K}$ is defined as $\mathbf{K}_{ij}=e^{-\lambda\mathbf{C}_{ij}}$, and $\mathbf{C}$ is the transport cost matrix with entries $\mathbf{C}_{ij}$ measuring the pairwise cost between the 
$i$-th element of the source distribution and the $j$-th element of the target distribution.
After convergence, the optimal transport plan is obtained as: 
\begin{equation}\label{eq_appendix:ot_plan}
    \mathbf{T}^{\star}=\text{diag}(\boldsymbol{a}_{u})\mathbf{K}\text{diag}(\mathbf{a}_{v}),
\end{equation}
and the corresponding minimal transport distance (i.e., the optimal transport cost) is:
\begin{equation}
    \mathbf{D}^{\star}=\langle \mathbf{T}^{\star}, \mathbf{C} \rangle,
\end{equation}
where $\langle\cdot, \cdot\rangle$ denotes the Frobenius inner product, i.e., $\langle\mathbf{T^{\star}, \mathbf{C}}\rangle=\sum_{ij}\mathbf{T}^{\star}_{ij}\mathbf{C}_{ij}$.

\subsection{Proof of Proposition~\ref{prop1}}
Here, we first restate the proposition as follows:
\begin{proposition}
    Let $\mathbf{\hat{T}}^{\star}$ denote the OT plan obtained by the Sinkhorn–Knopp algorithm after row normalization. Denote  $j_i^{\star}=\arg\max_{j}\mathbf{S}_{ij}$ and define the margin $\gamma_i=\mathbf{S}_{i j_i^{\star}}-\max_{j\neq j_i^{\star}}\mathbf{S}_{ij}$, we have $\mathbf{\hat{T}}^{\star}_{ij^{\star}}\ge\frac{1}{1+ae^{-\gamma_{i}/\epsilon}}$ with a constant $a$.
\end{proposition}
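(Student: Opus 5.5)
The plan is to use the scaling form of the Sinkhorn solution in Eq.~\eqref{eq_appendix:ot_plan}, $\mathbf{T}^{\star}=\mathrm{diag}(\boldsymbol{a}_u)\mathbf{K}\,\mathrm{diag}(\boldsymbol{a}_v)$. The kernel comes from the cost $c(i,j)=-\mathbf{S}_{ij}$ with regularization $\epsilon$, so $\mathbf{K}_{ij}=e^{\mathbf{S}_{ij}/\epsilon}$.

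Row normalization removes the row scaling $(\boldsymbol{a}_u)_i$, since it is common to every entry of row $i$. Writing $v_j=(\boldsymbol{a}_v)_j>0$, this gives
\begin{equation*}
\hat{\mathbf{T}}^{\star}_{ij}=\frac{v_j e^{\mathbf{S}_{ij}/\epsilon}}{\sum_{k} v_k e^{\mathbf{S}_{ik}/\epsilon}}.
\end{equation*}
So $\hat{\mathbf{T}}^{\star}$ is a softmax over $j$ of $\mathbf{S}_{ij}/\epsilon$, shifted by the log column potentials $\log v_j$. Isolating the entry $j=j_i^{\star}$, we get
\begin{equation*}
\hat{\mathbf{T}}^{\star}_{ij_i^{\star}}=\Bigl(1+\sum_{k\neq j_i^{\star}}\frac{v_k}{v_{j_i^{\star}}}e^{(\mathbf{S}_{ik}-\mathbf{S}_{ij_i^{\star}})/\epsilon}\Bigr)^{-1}.
\end{equation*}

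Each exponent satisfies $\mathbf{S}_{ik}-\mathbf{S}_{ij_i^{\star}}\le-\gamma_i$ by the definition of the margin. The sum is therefore at most $e^{-\gamma_i/\epsilon}\sum_{k\neq j_i^{\star}} v_k/v_{j_i^{\star}}$. Setting
\begin{equation*}
a=\max_{j}\sum_{k\neq j}\frac{v_k}{v_j}\le (N-1)\frac{\max_k v_k}{\min_k v_k}
\end{equation*}
yields $\hat{\mathbf{T}}^{\star}_{ij_i^{\star}}\ge 1/(1+a e^{-\gamma_i/\epsilon})$. Here $a$ is independent of $i$ and of the margin; it depends only on the column scaling vector.

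The main obstacle is showing that $a$ is a genuine finite constant. That is, the ratio $\max_k v_k/\min_k v_k$ of the Sinkhorn column potentials must be bounded and must not itself blow up as $\epsilon\to 0$. If it did, the factor $e^{-\gamma_i/\epsilon}$ would not control the bound.

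I would first try the standard a priori bound on Sinkhorn potentials. From the fixed-point equation $v_j=\mu_{q,j}/\sum_i \mathbf{K}_{ij}(\boldsymbol{a}_u)_i$, together with the bounded similarity range $\mathbf{S}_{ij}\in[s_{\min},s_{\max}]$ (latent features are bounded), the log-potentials have oscillation at most $(s_{\max}-s_{\min})/\epsilon+\log(\max\mu_q/\min\mu_q)$.

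Because this bound still carries a $1/\epsilon$, I would state the result with $a$ treated as a fixed constant for the given $\epsilon$. For the uniform marginals used in practice, I would argue that the potentials are close to uniform, so $a\approx N-1$. Alternatively, the iteration can be truncated after its first row-normalization step: then $v\equiv$ const and one gets exactly $a=N-1$. This is the case I would present as the clean version of the proposition.
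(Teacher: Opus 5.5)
Your proof is the same as the paper's: row normalization cancels the row scaling in $\mathbf{T}^{\star}=\mathrm{diag}(\boldsymbol{a}_u)\mathbf{K}\,\mathrm{diag}(\boldsymbol{a}_v)$, you isolate the $j_i^{\star}$ term, and you bound every other exponential by $e^{-\gamma_i/\epsilon}$ using the margin. Where the two differ, yours is more careful: the paper writes the column weights as the marginal entries $(\boldsymbol{\mu}_v)_j$ rather than the Sinkhorn column scaling, which tacitly assumes $\boldsymbol{a}_v\propto\boldsymbol{\mu}_v$ and hides the $\epsilon$-dependence of $a$ that you rightly flag (your ``potentials are nearly uniform'' heuristic is equally unjustified, so keep $a$ defined through $v$), and the paper uses an $i$-dependent $a_i$ where you maximize over $j$ to get a single constant.
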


\begin{proof}
For the row-normalized OT plan $\hat{\mathbf{T}}^{\star}$, we have $\sum_{j}\hat{\mathbf{T}}^{\star}_{ij}=1$. 
According to Eq.~\eqref{eq_appendix:ot_plan}, the following holds:
\begin{align}
\hat{\mathbf{T}}^{\star}_{i j_i^{\star}}
&= \frac{\hat{\mathbf{T}}^{\star}_{i j_i^{\star}}}{\sum_{j}\hat{\mathbf{T}}^{\star}_{ij}} \\
&= \frac{(\boldsymbol{\mu}_{v})_{j_i^{\star}} e^{\mathbf{S}_{i j_i^{\star}} / \varepsilon}}
{\sum_{j} (\boldsymbol{\mu}_{v})_{j} e^{\mathbf{S}_{ij} / \varepsilon}} \\
&= \frac{1}{
1 + \sum_{j \neq j_i^{\star}}
\frac{(\boldsymbol{\mu}_{v})_{j}}{(\boldsymbol{\mu}_{v})_{j_i^{\star}}}
e^{-(\mathbf{S}_{i j_i^{\star}} - \mathbf{S}_{ij}) / \varepsilon}}
\label{eq_appendix:proof_1} \\
&\ge \frac{1}{1 + a_i e^{-\gamma_i / \varepsilon}},
\label{eq_appendix:proof_2}
\end{align}
where the inequality in Eq.~\eqref{eq_appendix:proof_2} follows from the fact that 
for any $j \neq j_i^{\star}$, we have 
$\mathbf{S}_{i j_i^{\star}} - \mathbf{S}_{ij} \ge \gamma_i$, 
and therefore 
$e^{-(\mathbf{S}_{i j_i^{\star}} - \mathbf{S}_{ij}) / \varepsilon} 
\le e^{-\gamma_i / \varepsilon}$.
Here, $a_i = \sum_{j \neq j_i^{\star}}
\frac{(\boldsymbol{\mu}_{v})_{j}}{(\boldsymbol{\mu}_{v})_{j_i^{\star}}}$.

This completes the proof.
\end{proof}

\section{Compression Rate Computation}
We define the compression rate of visual autoencoders in the same manner as in previous work~\cite{wang2025vidtwin,liu2025hi,wu2025h3ae}. The formal definition is given as follows:
\begin{definition}[Compression Rate of VAEs]
\label{def:compression_rate}
Let $\mathbf{x}\in\mathbb{R}^{H\times W\times T\times C}$ denote the input video with spatial resolution $(H,W)$, temporal length $T$, and channel dimension $C$. 
Let $\mathbf{z}\in\mathbb{R}^{H'\times W'\times T'\times C'}$ denote the corresponding latent representation produced by the VAE encoder. 
The \emph{compression rate} $\mathcal{R}$ is defined as the ratio between the total number of latent variables and that of the input video, i.e.,
\begin{equation}
r= \frac{H'W'T'C'}{HWT C}.
\end{equation}
\end{definition}
A smaller $r$ indicates a higher compression level, implying that fewer latent tokens are used to represent the same visual information, thereby making the training of downstream video generative models more efficient.
The compression rate of the proposed \modelname{} and all baseline models is computed as follows:

\begin{itemize}
    \item \textbf{Compression rate of \modelname{} (ours)}: 
    The proposed \modelname{} employs a dual-encoder architecture that produces two complementary latent representations. 
    Specifically, the \emph{robotic-arm encoder} applies a temporal downsampling factor of $4$ and a spatial downsampling factor of $16$, 
    while the \emph{environment-context encoder} uses a temporal downsampling factor of $8$ and a spatial downsampling factor of $8$. 
    Both latent spaces have a channel dimension of $4$, leading to an overall compression rate of:
    \begin{equation}
    \begin{aligned}
        &r_{\text{EmbodiedVAE}} \\
         =& \frac{4}{3\times(4\times16\times16)} + \frac{4}{3\times(8\times8\times8)} \\
        \approx& 0.39\%.
    \end{aligned}
    \end{equation}

    \item \textbf{Compression rate of OpenSoraPlan-VAE}~\cite{lin2024open}: OpenSoraPlan-VAE applies a temporal downsampling factor of $4$ and a spatial downsampling factor of $8$, with the latent channel dimension of $8$ resulting in a compression rate of:
    \begin{equation}
        r_{\text{OpenSoraPlan-VAE}} = \frac{8}{3 \times (4 \times 8 \times 8)} \approx 1.04\%.
    \end{equation}

    \item \textbf{Compression rate of Cosmos}~\cite{agarwal2025cosmos}: 
    We adopt the \textit{Cosmos-Tokenizer}, specifically with ``CV4x8x8'', which applies a temporal downsampling factor of $4$ and a spatial downsampling factor of $8$, 
    with a latent channel dimension of $16$, resulting in a compression rate of:
    \begin{equation}
        r_{\text{Cosmos}} = \frac{16}{3 \times (4 \times 8 \times 8)} \approx 2.08\%.
    \end{equation}

    \item \textbf{Compression rate of iVideoGPT}~\cite{wu2024ivideogpt}:
    iVideoGPT adopts a distinct compression strategy, where the first $t_{0}$ context frames are encoded using $n_{0}$ tokens, 
    while the subsequent frames are represented with fewer tokens $n_{1} < n_{0}$. 
    Based on the configuration reported in the original paper, the overall compression rate is computed as:
    \begin{equation}
    \begin{aligned}
         r_{\text{iVideoGPT}} 
        &= \frac{2 \times 16^{2} \times 64 + 14 \times 4^{2} \times 64}
        {3 \times 16 \times 256^{2}} \\
        &\approx 1.50\%.
    \end{aligned}
    \end{equation}

    \item \textbf{Compression rate of MAGVIT-v2}~\cite{yu2023language}:
    MAGVIT-v2 applies a temporal downsampling factor of $4$ and a spatial downsampling factor of $8$. 
    The latent channel dimension is set to $5$, as reported in the original paper, resulting in a compression rate of:
    \begin{equation}
        r_{\text{MAGVIT-v2}} = \frac{5}{3 \times (4 \times 8 \times 8)} \approx 0.65\%.
    \end{equation}

    \item \textbf{Compression rate of VidTwin}~\cite{wang2025vidtwin}: 
    Vidtwin decouples the input video into two latent representations of sizes $7 \times 7 \times 16 \times 4$ and $16 \times 14 \times 8$, 
    for an input video clip of size $16 \times 3 \times 224 \times 224$. 
    Accordingly, the overall compression rate is computed as:
    \begin{equation}
    \begin{aligned}
         r_{\text{VidTwin}} 
        &= \frac{7 \times 7 \times 16 \times 4 + 16 \times 14 \times 8}
        {3 \times 16 \times 224 \times 224} \\
        &\approx 0.20\%.
    \end{aligned}
    \end{equation}
    
    \item \textbf{Compression rate of EMU-3}~\cite{wang2024emu3}: 
    Similar to MAGVIT-v2, EMU-3 achieves a $4\times$ temporal compression and an $8\times8$ spatial compression. 
    With a latent channel dimension of $4$, its compression rate is computed in the same manner as:
    \begin{equation}
        r_{\text{EMU-3}} 
        = \frac{4}{3 \times (4 \times 8 \times 8)} 
        \approx 0.53\%.
    \end{equation}

    \item \textbf{Compression rate of CV-VAE}~\cite{zhao2024cv}: 
    In terms of compression rate, CV-VAE matches EMU-3, 
    achieving a $4\times$ temporal compression and an $8\times8$ spatial compression. 
    Accordingly, its compression rate is identical to EMU-3:
    \begin{equation}
        r_{\text{CV-VAE}} = \frac{4}{3 \times (4 \times 8 \times 8)} \approx 0.53\%.
    \end{equation}
    
    \item \textbf{Compression rate of CMD}~\cite{yu2024efficient}: 
    CMD decouples video representations into \emph{content frames} and \emph{motion latents}. 
    For a video of size $(C, F, H, W)$, the content frame has dimensions $(C, H, W)$, 
    while the motion latent has dimensions $(D, H + W, F)$, where $D$ denotes the dimension of the motion vector. 
    Based on the configuration reported in the paper, the compression rate is computed as:
    \begin{equation}
    \begin{aligned}
        r_{\text{CMD}} 
        &= \frac{1}{F} + \frac{D\times(H + W)}{C\times H\times W} \\
        &= \frac{1}{16} + \frac{2 \times 224 \times 32}{3 \times 224 \times 224} 
        \approx 6.85\%.
    \end{aligned}
    \end{equation}

    \item \textbf{Compression rate of SDXL}~\cite{podell2023sdxl}: 
    SDXL is an image-based VAE that encodes videos frame by frame without any temporal compression. 
    Each frame is encoded with an $8\times8$ spatial downsampling factor and a latent channel dimension of $16$. 
    Thus, the overall compression rate is computed as:
    \begin{equation}
        r_{\text{SDXL}} 
        = \frac{16}{3 \times 8 \times 8} 
        \approx 8.32\%.
    \end{equation}
        
    \item \textbf{Compression rate of Cog-VAE}~\cite{yang2024cogvideox}: 
    Cog-VAE applies a $4\times$ temporal downsampling and an $8\times8$ spatial downsampling, 
    resulting in a latent representation with a channel dimension of $16$. 
    Accordingly, the overall compression rate is computed as:
    \begin{equation}
        r_{\text{Cog-VAE}} 
        = \frac{16}{3 \times (4 \times 8 \times 8)} 
        \approx 2.08\%.
    \end{equation}

    \item \textbf{Compression rate of Wan-VAE}~\cite{wan2025wan}: 
    Wan-VAE, similar to Cog-VAE, applies a $4\times$ temporal downsampling and an $8\times8$ spatial downsampling, 
    resulting in a latent representation with a channel dimension of $16$. 
    However, it encodes and decodes the first frame independently. 
    For a video of $4T+1$ frames, the model produces $T+1$ latent representations. 
    Taking a $3\times17\times256\times256$ video clip as an example, the compression rate is computed as:
    \begin{equation}
        r_{\text{Wan-VAE}} 
        = \frac{16 \times 5 \times 32 \times 32}{3 \times 17 \times 256 \times 256} 
        \approx 2.45\%.
    \end{equation}
   
\end{itemize}

\section{Experiment Details}\label{sec:exp}
\subsection{Details about the VAE Reconstruction}
\subsubsection{Implement Details}
We adopt a self-curated dataset containing approximately one million videos, 
each accompanied by pre-generated robotic arm masks obtained using 
SAM2~\cite{ravi2024sam} and Grounding DINO~\cite{ren2024grounding} 
with the text prompt ``Robotic Arm''. 
Our model is trained on 3~fps, 16-frame, $256\times256$ video clips 
and evaluated on 16~fps, 16-frame, $256\times256$ video clips. 
Training is conducted on 16 NVIDIA A800 GPUs (40~GB). 
The detailed training configuration is provided in Table~\ref{tb:configuration}, where $\lambda_{\text{perception}}$ denotes the weight of the perception loss~\cite{tang2024vidtok}, 
and $\lambda_{\text{GAN}}$ denotes the weight of the standard GAN loss, 
while $\beta$ represents the weight of the KL divergence term 
and $\alpha$ corresponds to the weight of the OT-based consistency loss.

\subsubsection{Baseline Details}
For the baselines including OpenSora~\cite{lin2024open}, 
Cosmos~\cite{agarwal2025cosmos}, \\ iVideoGPT~\cite{wu2024ivideogpt}, 
VidTwin~\cite{wang2025vidtwin}, EMU-3~\cite{wang2024emu3}, 
CV-VAE~\cite{zhao2024cv}, Cog-VAE~\cite{yang2024cogvideox}, 
and Wan-VAE~\cite{wan2025wan}, 
we adopt the official checkpoints and implementations provided in their repositories. 
For MAGVIT-v2~\cite{yu2023language} and CMD~\cite{yu2024efficient}, 
since official code and pretrained checkpoints are not publicly available, 
we re-implemented these methods based on the descriptions in their respective papers.

\begin{table}[!htbp]
  \centering
  \caption{Training Configuration.}
  \vspace{-0.5em}
  \label{tb:configuration}
  \tabcolsep=0.1cm
  \resizebox{0.8\linewidth}{!}{ 
   \begin{tabular}{L{5cm}|C{5cm}}
    \toprule
    \multicolumn{1}{c|}{\textbf{Parameter }} & \textbf{Value} \\
    \midrule
    Input Video Resolution & $256\times 256$ \\
    Input Video Frames & 16 \\
    Input Video FPS & 3 \\
    Adam Optimizer  & $\beta_{1}=0.9$, $\beta_{2}=0.99$ \\
    \midrule
    \multicolumn{2}{c}{\textit{First Training Stage}} \\
    \midrule
    Learning Rate & $1.0 \times 10^{-5}$  \\
    wegiht decay & $1.0 \times 10^{-4}$ \\
    $\lambda_{\text{perception}}$ & $1.0 \times 10^{0}$ \\
    $\lambda_{\text{GAN}}$ & $5.0\times 10^{-3}$ \\
    $\beta$ & $1.0\times 10^{-6}$ \\
    $\alpha$ & $1.0\times 10 ^{4}$ \\
    \midrule
    \multicolumn{2}{c}{\textit{Second Training Stage}} \\
    \midrule
    Learning Rate (Decoder) & $1.0 \times 10^{-5}$ \\
    Learning Rate (Encoder) & $1.0 \times 10^{-8}$ \\
    $\lambda_{\text{perception}}$ & $1.0 \times 10^{0}$ \\
    $\lambda_{\text{GAN}}$ & $5.0\times 10^{-3}$ \\
    $\beta$ & $1.0\times 10^{-6}$ \\
    \bottomrule
    \end{tabular}%
}
\end{table}

\subsection{Details about Action-Controlled Video Prediction for Robotic Manipulation}

\subsubsection{Task Description}  
Video generative models have recently demonstrated strong potential as world models. 
When training such models for robotic manipulation, it is crucial to accurately simulate the complex interactions among the robot, manipulated objects, and the surrounding environment. 
In this task, we evaluate whether the models can generate realistic videos that depict fine-grained robot–object interaction dynamics, given a sequence of historical observations and a corresponding action trajectory.

\subsubsection{LDM Backbone}
To ensure fairness, we adopt IRASim-L (461M)~\cite{zhu2025irasim} as the downstream LDM backbone for all robotic manipulation video prediction experiments, 
while the only difference lies in the choice of VAEs (including our proposed \modelname{} and the baseline models) used during training and inference.
IRASim is a DiT-style model that incorporates a novel frame-level action-conditioning module within each transformer block, 
explicitly modeling and strengthening the alignment between every action and its corresponding video frame.
IRASim provides four model scales: 
(1) \textit{IRASim-S} with 12 layers and a hidden size of 768, 
(2) \textit{IRASim-B} with 12 layers and a hidden size of 768, 
(3) \textit{IRASim-L} with 24 layers and a hidden size of 1024, 
and (4) \textit{IRASim-XL} with 28 layers and a hidden size of 1152. 
To balance performance and computational resource consumption, we adopt IRASim-L, which contains 461M parameters, for all robotic manipulation experiments.

\subsubsection{Implement Details}
\noindent\textbf{Action Condition Injection}
As different VAEs exhibit varying temporal compression rates, we detail the action-injection strategies applied to each VAE’s latent representations for IRASim as follows:
\begin{figure*}[!t]
\centering
\includegraphics[width=\linewidth]{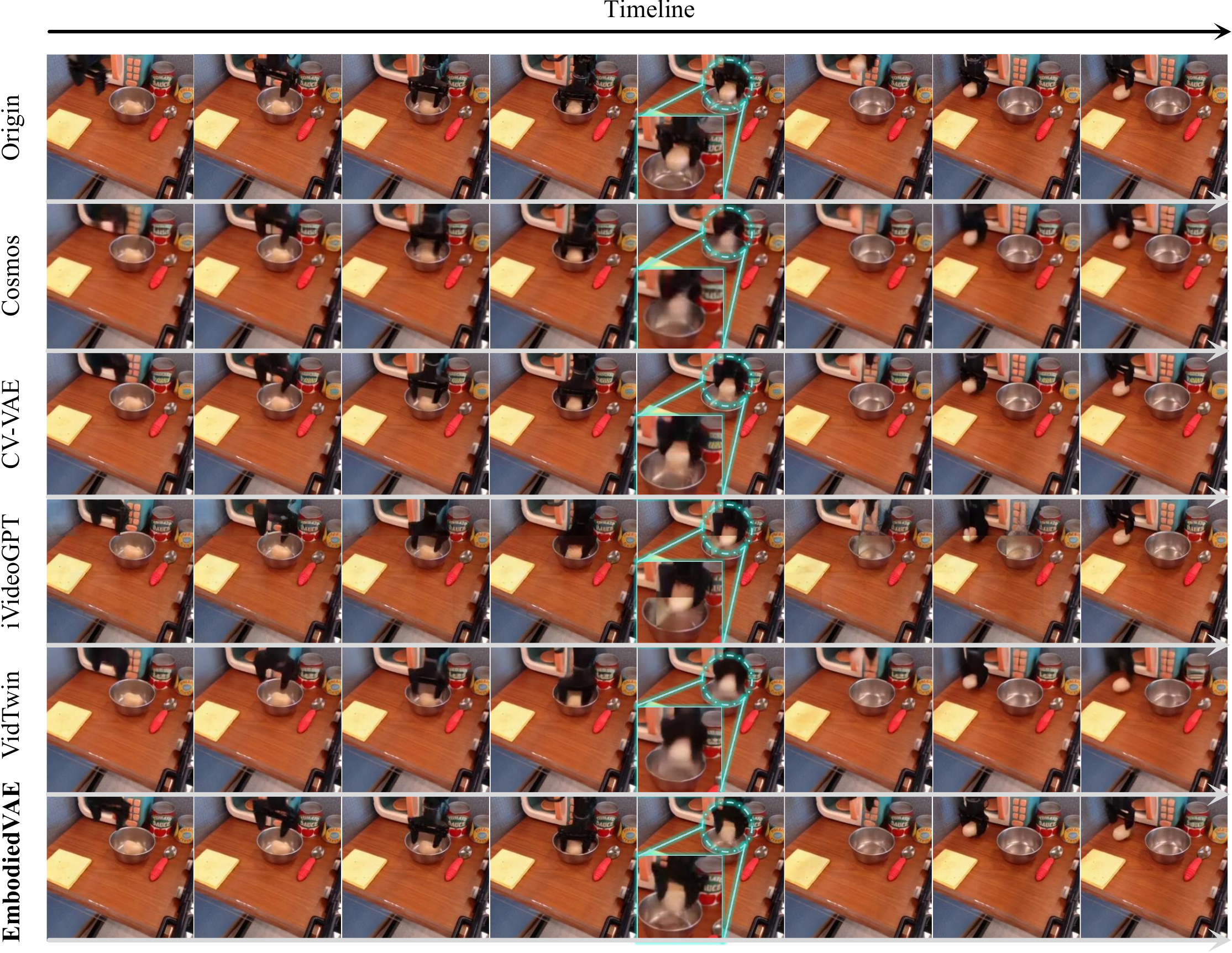}
\caption{Additional examples of VAE reconstruction tasks on the Bridge~\cite{walke2023bridgedata} dataset.}
\label{appendix_fig:recon_example_bridge}
\end{figure*}
\begin{figure*}[!t]
\centering
\includegraphics[width=\linewidth]{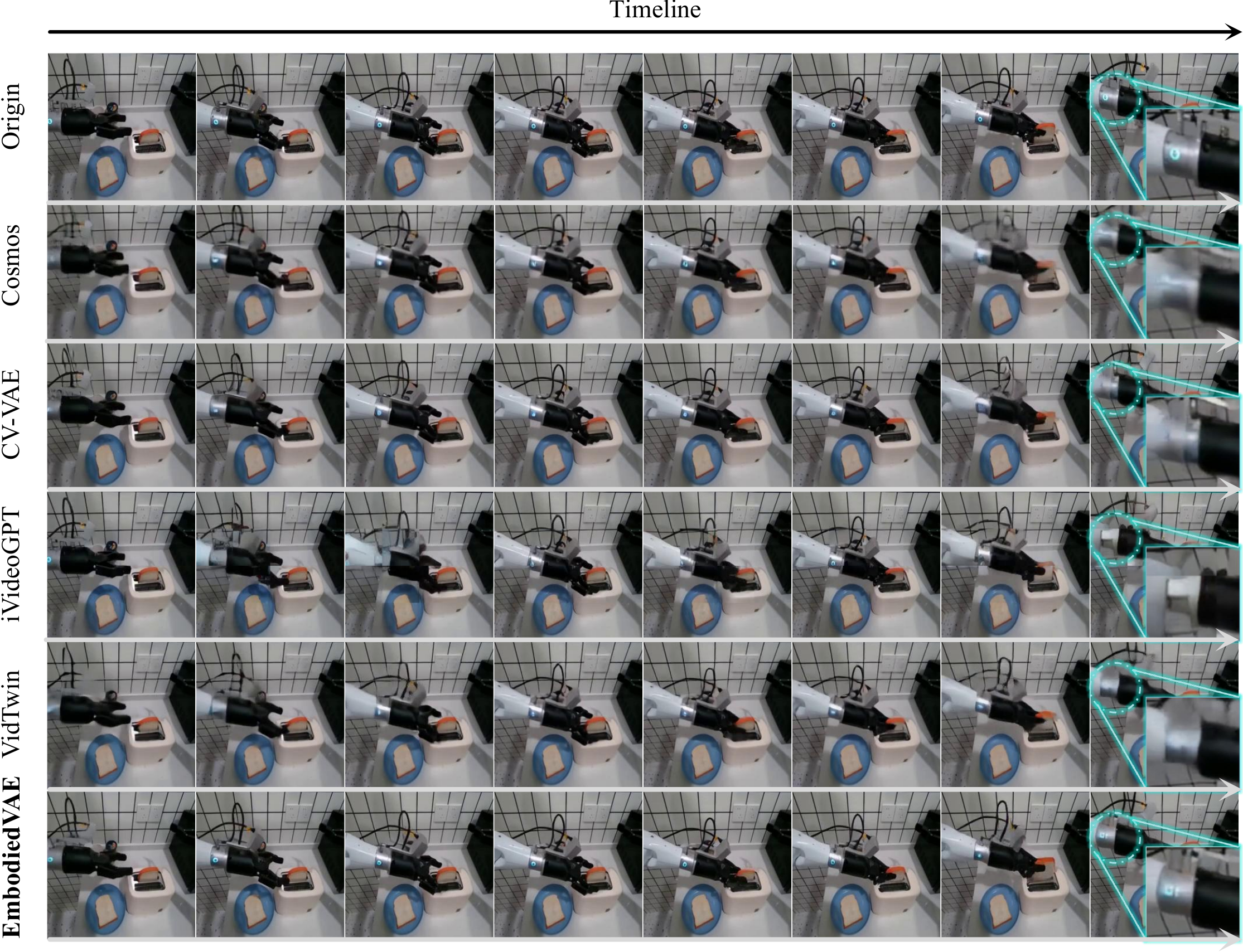}
\caption{Additional examples of VAE reconstruction tasks on the Agibot-2025~\cite{bu2025agibot} dataset.}
\label{appendix_fig:recon_example_agibot}
\end{figure*}
\begin{itemize}
    \item For our proposed \textit{\modelname{}}, given a sequence of $T\!-\!1$ action vectors 
    $\mathbf{A}\in\mathbb{R}^{(T-1)\times d}$, 
    where $d$ denotes the action dimension, each action represents the relative motion with respect to the first frame. 
    We first pad a zero vector to align the sequence with the $T$ video frames, 
    obtaining $\hat{\mathbf{A}}\in\mathbb{R}^{T\times d}$. 
    A learnable 1D convolution is then applied to temporally downsample $\hat{\mathbf{A}}$ 
    into $\hat{\mathbf{A}}_{\text{arm}}\in\mathbb{R}^{\frac{T}{4}\times d}$, 
    which is injected into the latent representation of the robotic arm, $\mathbf{Z}_{\text{arm}}$. 
    Subsequently, another 1D convolution is used to further downsample 
    $\hat{\mathbf{A}}_{\text{arm}}$ into 
    $\hat{\mathbf{A}}_{\text{env}}\in\mathbb{R}^{\frac{T}{8}\times d}$, 
    representing the environmental actions that are injected into the latent representation 
    $\mathbf{Z}_{\text{env}}$.
    Both latent representations share the same diffusion process and LDM parameters.

    \item For the \textit{OpenSoraPlan-VAE}~\cite{lin2024open}, \textit{Emu3}~\cite{wang2024emu3}, \textit{CV-VAE}~\cite{zhao2024cv}, and \textit{Cog-VAE}~\cite{yang2024cogvideox} which resulting the latent with 4 temporal downsamlping, the same as our \modelname{} we adopting the 1D convolution to downsampleing the $\mathbf{\hat{A}}$ into $\mathbf{\hat{A}}_{\text{align}}\in\mathbb{R}^{\frac{T}{4}\times d}$ to provide aligned actions. 

    \item As for \textit{Wan-VAE}~\cite{wan2025wan}, since it encodes the first frame independently, 
    we follow its encoding strategy when injecting action information. 
    Given $T\!+\!1$ frames, we apply a 1D convolution to the actions of the last $T$ frames, 
    downsampling them into $\tfrac{T}{4}$ action conditions. 
    These are then concatenated with the first-frame action, 
    yielding the aligned action sequence 
    $\mathbf{\hat{A}}_{\text{align}} \in \mathbb{R}^{(1+\frac{T}{4})\times d}$.

    \item For \textit{CMD}~\cite{yu2024efficient}, since it decouples the input into a content latent without a temporal dimension 
    and a motion latent without temporal compression, 
    we follow the original paper and employ two independent diffusion processes to denoise the motion and content latents, 
    each modeled by an IRASim DiT architecture of the same design. 
    We apply the complete action sequence $\mathbf{\hat{A}}\in \mathbb{R}^{T\times d}$ to the motion diffusion branch, 
    as the motion latent preserves the full temporal resolution. 
    For the content diffusion branch, we use the content latent representations of the historical frames as conditional inputs 
    to guide the generation.

    \item For \textit{\textit{VidTwin}}~\cite{wang2025vidtwin}, 
    it produces three latent representations: 
    a structure latent 
    $\mathbf{Z}_{S}\in\mathbb{R}^{d_{S}\times T\times h_{S}\times w_{S}}$ 
    and two dynamics latents, 
    $\mathbf{Z}_{D}^{h}\in\mathbb{R}^{d_{D}\times T\times h_{D}}$ and 
    $\mathbf{Z}_{D}^{w}\in\mathbb{R}^{d_{D}\times T\times w_{D}}$. 
    Following the procedure described in the original paper, 
    we apply two independent patchification modules to convert both dynamics latents into sequences of tokens. 
    These token sequences are normalized and aligned to a comparable scale, 
    then concatenated along the temporal dimension. 
    Finally, the complete action sequence $\hat{\mathbf{A}}$ is injected into the latent representations.

    \item For \textit{SDXL}\cite{podell2023sdxl}, the official VAE used in IRASim\cite{zhu2025irasim}, we follow its default configuration and represent actions as relative motion with respect to the first frame.
\end{itemize}

\noindent In this way, we ensure fairness by injecting the action conditions consistently across all models. 
For CMD, we adopt two independent diffusion processes 
to denoise its motion and content latents separately, 
following the official implementation~\cite{yu2024efficient}. 
In contrast, a single diffusion process is employed in all other models 
Each diffusion process consists of 1000 steps, 
with DDIM~\cite{song2020denoising} used as the sampling strategy 
and 50 steps applied during inference.

\begin{figure*}[!t]
\centering
\includegraphics[width=0.5\linewidth]{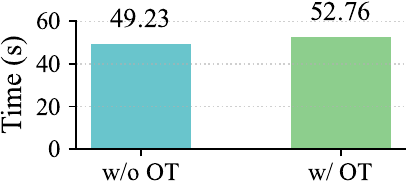}
\caption{Computational Time Analysis of the OT-based Module.}
\label{appendix_fig:time}
\end{figure*}

\section{Computational Time Analysis of the OT-based Module}
Although the complexity of this module is $\mathcal{O}(\Delta TN^{2}d)$, in practice, the OT-based loss is applied to latents with the input resolution $256\times256$, and our robotic arm encoder applies strong spatial compression with $16\times 16$, so the actual computational overhead is modest. 
We report the training time per 100 samples with and without motion loss on A100 GPUs in Figure~\ref{appendix_fig:time}, showing that \textit{the cost introduced by the OT module is acceptable in practice}.

\begin{figure*}[!htbp]
\centering
\includegraphics[width=1.0\linewidth]{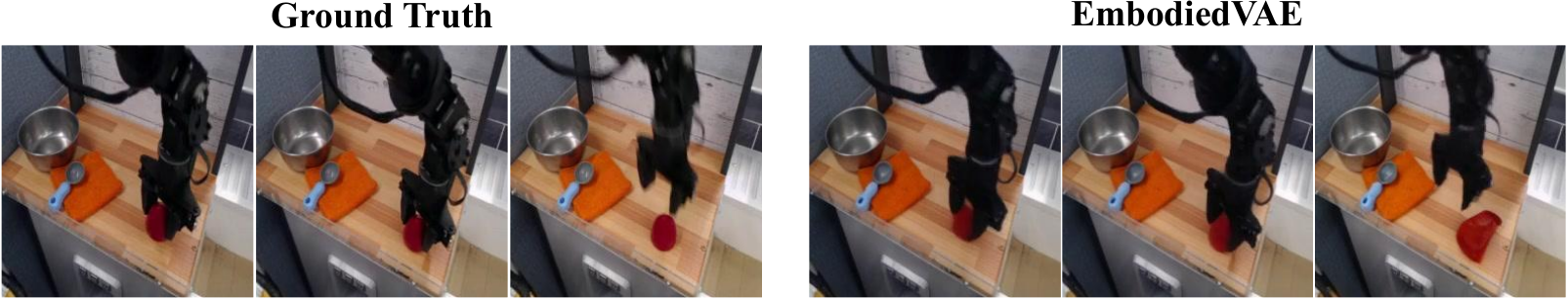}
\caption{Scenarios Where Robotic Arms Occupy Large Spatial Regions.}
\label{appendix_fig:assumption}
\end{figure*}

\section{Scenarios Where Robotic Arms Occupy Large Spatial Regions}
We provide examples in Figure~\ref{appendix_fig:assumption} where the robotic arms occupy relative large portion of the frame. As shown, \textit{even when the robotic arms violate the assumption} of occupying only a small spatial region, our proposed model still achieves high-quality reconstruction and manipulation performance.

\begin{figure*}[!t]
\centering
\includegraphics[width=\linewidth]{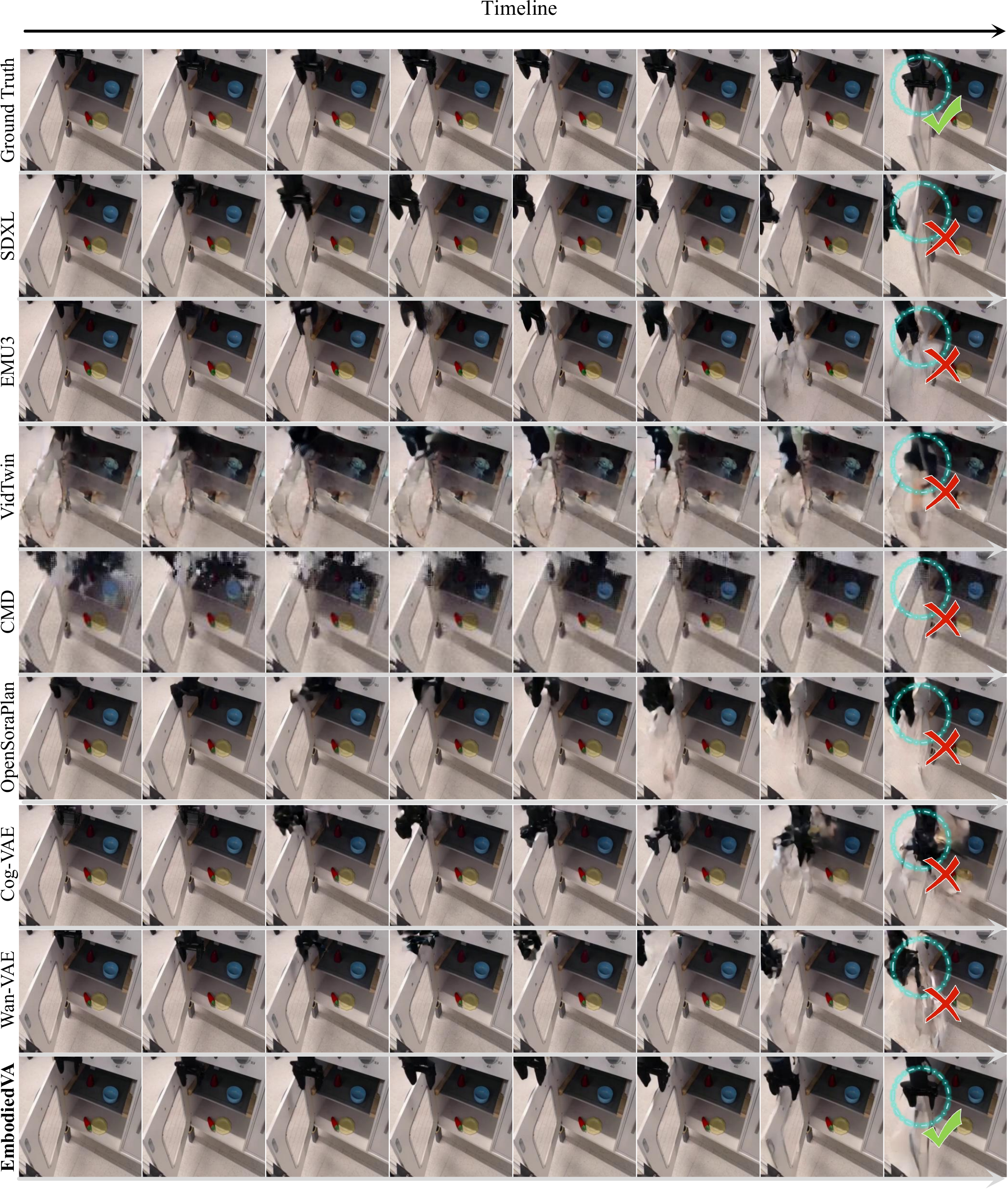}
\caption{Additional examples of video prediction tasks in robotic manipulation scenarios on the Bridge~\cite{walke2023bridgedata} dataset.}
\label{appendix_fig:gen_example_bridge}
\end{figure*}

\begin{figure*}[!t]
\centering
\includegraphics[width=\linewidth]{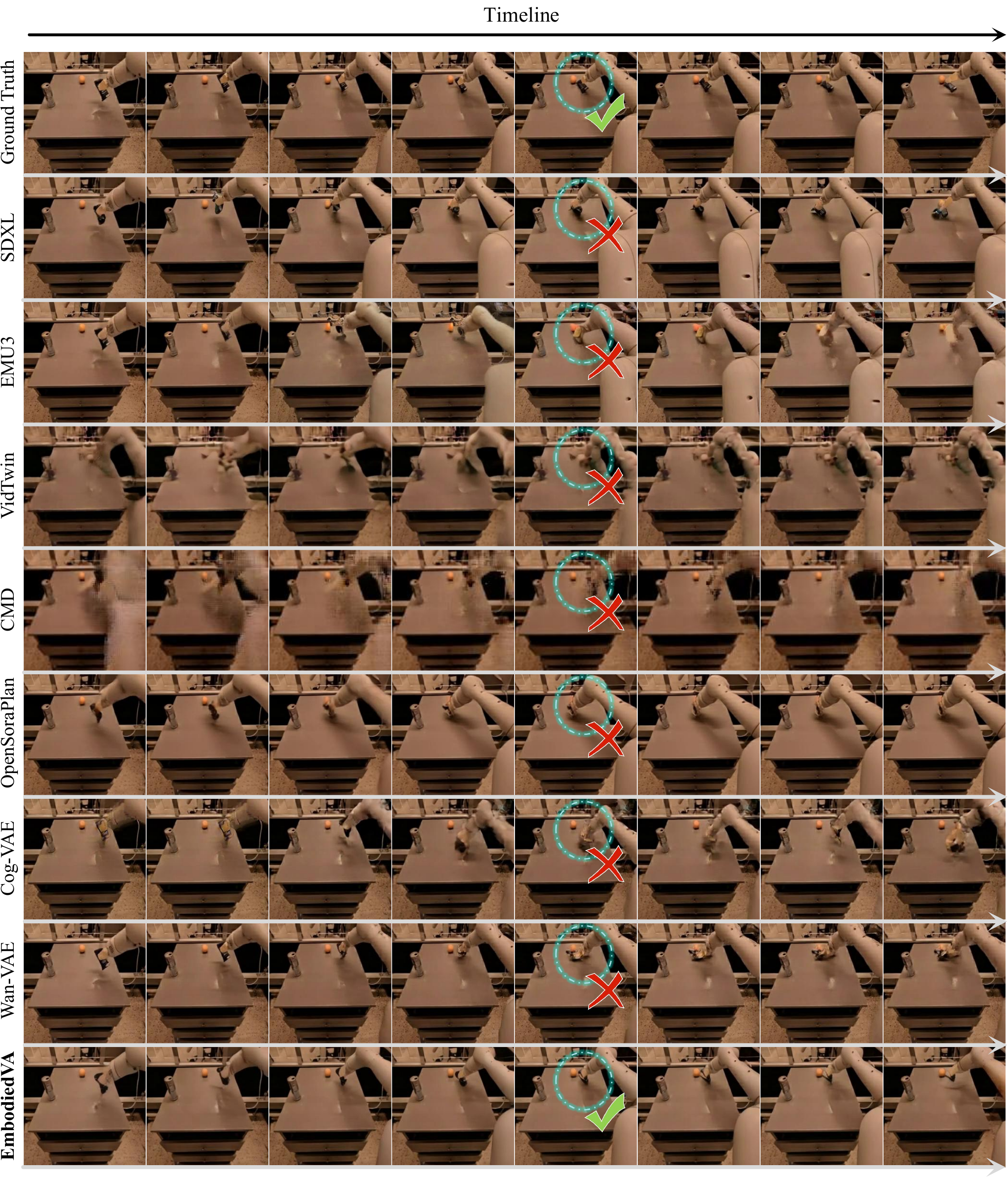}
\caption{Additional examples of video prediction tasks in robotic manipulation scenarios on the RT-1~\cite{brohan2022rt} dataset.}
\label{appendix_fig:gen_example_rt1}
\end{figure*}

\section{Additional Results}
Here, we present additional experimental results for both tasks.
The additional experimental results for the video VAE reconstruction tasks are shown in Figure~\ref{appendix_fig:recon_example_bridge} and Figure~\ref{appendix_fig:recon_example_agibot}. As can be observed, our proposed \modelname{} demonstrates superior reconstruction capability, particularly in accurately capturing the robotic arms and electronic components during manipulation.
Furthermore, the additional results for downstream video prediction in robotic manipulation scenarios are presented in Figure~\ref{appendix_fig:gen_example_bridge} and Figure~\ref{appendix_fig:gen_example_rt1}. As shown, \modelname{} provides a more controllable and compact latent representation that enables more accurate action control. This improvement can be attributed to our disentangling modules coupled with the OT-based motion consistency module, which together produce explicit, temporally consistent motion latent representations that are free from the irrelevant background information.

\end{document}